\newcommand{\round}[1]{\lfloor #1 \rceil}
\newtheorem{theorem}{Theorem}
\newtheorem{proposition}[theorem]{Proposition}
\begin{document}

\title{iVPF: Numerical Invertible Volume Preserving Flow for \\ Efficient Lossless Compression}

\author{Shifeng Zhang \quad Chen Zhang \quad Ning Kang \quad Zhenguo Li\\
Huawei Noah's Ark Lab\\
{\tt\small \{zhangshifeng4, chenzhang10, kang.ning2, li.zhenguo\}@huawei.com}
}

\maketitle

\begin{abstract}
It is nontrivial to store rapidly growing big data nowadays, which demands high-performance lossless compression techniques.
Likelihood-based generative models have witnessed their success on lossless compression, 
where flow based models are desirable in allowing exact data likelihood optimisation with bijective mappings. 
However, common continuous flows are in contradiction with the discreteness of coding schemes, which requires either 1) imposing strict constraints on flow models that degrades the performance or 2) coding numerous bijective mapping errors which reduces the efficiency.
In this paper, we investigate volume preserving flows for lossless compression and show that a bijective mapping without error is possible. 
We propose Numerical Invertible Volume Preserving Flow (iVPF) which is derived from the general volume preserving flows. 
By introducing novel computation algorithms on flow models, an exact bijective mapping is achieved without any numerical error. 
We also propose a lossless compression algorithm based on iVPF. 
Experiments on various datasets show that the algorithm based on iVPF achieves state-of-the-art compression ratio over lightweight compression algorithms. 
\end{abstract}

\section{Introduction}
Lossless compression is widely applied to efficiently store a growing amount of data in the big data era. 
It is indicated by Shannon's source coding theorem that the optimal codeword length is lower bounded by the entropy of the data distribution~\cite{mackay2003information}. 
Many coding methods~\cite{huffman1952method,witten1987arithmetic,duda2013asymmetric} succeed in reaching the theoretical codelength, and are widely applied in various compression algorithms~\cite{gage1994new,collet2016smaller,rabbani2002jpeg2000,roelofs1999png}.

Probabilistic modelling techniques can be adopted for lossless compression with coding algorithms, in which the minimum possible codelength would be essentially bounded by the negative log-likelihood of the probabilistic model. 
Recent advances of deep generative models have shown great success in maximising the likelihood of various types of data including images~\cite{kingma2013auto,kingma2018glow,salimans2017pixelcnn++,dinh2016density}, videos~\cite{kumar2019videoflow,kalchbrenner2017video} and audios~\cite{prenger2019waveglow,oord2016wavenet}. 
Various lossless compression algorithms have been discovered based on different types of generative models~\cite{mentzer2019practical,townsend2019practical,hoogeboom2019integer,berg2020idf++,ho2019compression,townsend2019hilloc,kingma2019bit,mentzer2020learning,cao2020lossless}. 
They witnessed better compression ratio compared to those without machine learning techniques.

\begin{figure}
    \centering
    \includegraphics[width=0.45\textwidth]{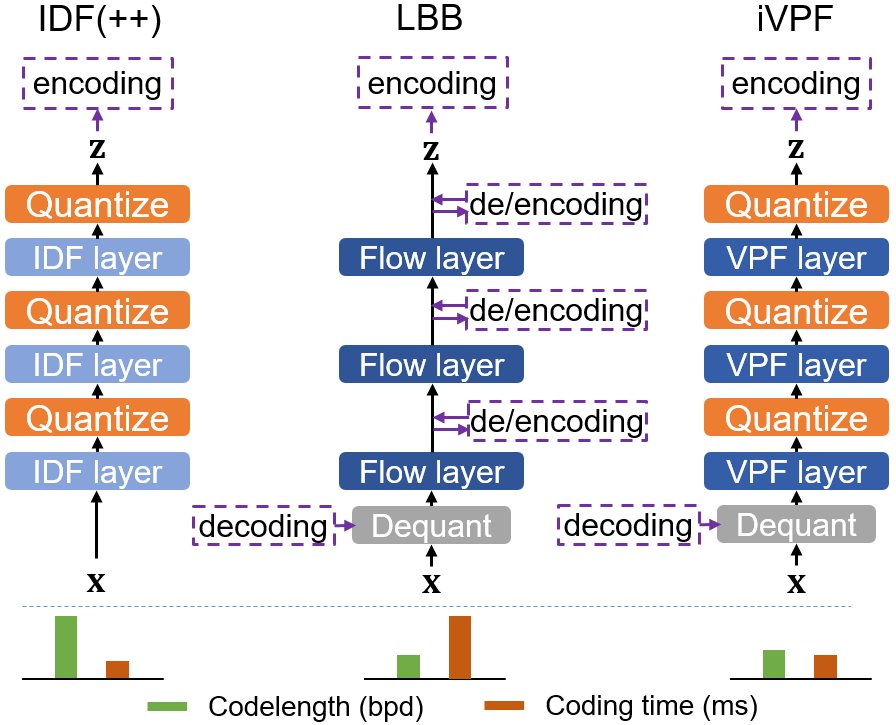}
    \caption{Illustration of IDF(++), LBB and the proposed iVPF model. VPF denotes volume preserving flow. Flow layers with darker colour denotes higher expressive power. The compression codelength and coding time are listed below (lower is better).}
    \label{fig:illustration}
\end{figure}

In this paper, we focus on coding with flow-based generative models~\cite{dinh2016density,kingma2018glow,dinh2016density,prenger2019waveglow,kumar2019videoflow}. 
Although variational auto-encoders (VAEs)~\cite{kingma2013auto,higgins2016beta} and auto-regressive~\cite{salimans2017pixelcnn++,van2016conditional} models are both probabilistic models, they either suffer from inferior likelihood estimation (VAEs), or involve huge computational overhead (auto-regressive). 
Normalising flow models admit bijective mappings between input data and latent variables, and the latter is adopted for likelihood computation given the prior. 
However, common flows are continuous and numerical errors always occur for forward and inverse computation~\cite{behrmann2020understanding}, while lossless compression involves discrete data and any numerical error is unacceptable. 
To solve the above issue, IDF~\cite{hoogeboom2019integer} and IDF++~\cite{berg2020idf++} introduce discrete flows by adding discrete constraints on the model, but the theoretical likelihood is inferior to that of general continuous flows. 
LBB~\cite{ho2019compression} succeeds in coding with general flows and the codelength is close to the theoretical lower bound of negative log-likelihood, but it is extraordinarily time-consuming. 

To circumvent the above issues, we introduce iVPF, a Numerical Invertible Volume Preserving Flow model with high expressivity and efficient coding.
We point out that general flows are almost not bijective in practice, because the real-world data are discrete in nature, resulting the difficulty for lossless compression.
The proposed iVPF is constructed from volume preserving flow~\cite{dinh2014nice, sorrenson2020disentanglement} to achieve the bijective mapping on discrete space, and an efficient lossless compression algorithm based on iVPF is developed. \footnote{Code will be available at \url{https://gitee.com/mindspore/models/tree/master/research/cv/ivpf}.}
The main contributions of this work are summarised as follows:

\begin{enumerate}
\setlength{\itemsep}{0pt}
    \item We investigate the importance of volume preserving flow in the lossless compression context, and propose the novel computation algorithm Modular Affine Transformation (MAT) which achieves exact bijective mapping without any numerical error for coupling layers of volume preserving flows.
    \item Based on MAT, we propose the Numerical Invertible Volume Preserving Flow (iVPF) model and an efficient lossless compression algorithm for iVPF. 
    \item Experiments show the effectiveness of the proposed lossless compression algorithm, where the compression performance is superior over the state-of-the-art lightweight methods.
\end{enumerate}

\section{Related Work}
\noindent
{\bf Entropy Coding for Lossless Compression} We put our work in the regime of machine learning techniques for lossless compression.
As indicated by Shannon's source coding theorem, the best possible lossless compression performance is given by entropy of the data distribution \cite{mackay2003information}.
Various coding schemes, e.g., Huffman coding \cite{huffman1952method}, Arithmetic coding \cite{witten1987arithmetic} and Asymmetric Numeral Systems \cite{duda2013asymmetric}, are developed so that symbols with certain distribution density can be encoded with codelength close to the entropy.
With recent advances in deep generative models, the machine learning community has been actively investigating learned probabilistic models which represent the data distribution more accurately and achieve state-of-the-art performance for lossless compression \cite{townsend2019practical, hoogeboom2019integer, ho2019compression, townsend2019hilloc, mentzer2019practical, berg2020idf++}. 

\noindent{\bf Explicit Probabilistic Modelling} 
Due to the fact that entropy coders require explicit evaluation of density functions, predominant approaches are based on Variational Inference and Maximum Likelihood. 
VAEs leverage the latent variables for expressive explicit mixture models and employ bits-back techniques \cite{wallace1990classification, hinton93keeping} to encode the hierarchical distribution structure. 
The theoretical codelength with VAEs is exactly the variational lower bound. However, there exists a gap between the lower bound and the true data entropy, resulting in inferior performance on compression ratio.
L3C \cite{mentzer2019practical} augments the data variable with hierarchical latents, and the compression ratio is compromised due to the introduction of latent variables. 
Autoregressive models \cite{salimans2017pixelcnn++} stipulates probabilistic models in a fully observable way in which the probability is factorised on the pixel level. They perform well for image generation, but suffer from high computational costs.
Compared to the previous explicit probability models, flow models admit direct evaluation of data density rather than the variational lower bound, yielding more efficient coding schemes in theory.

\noindent{\bf Coding with Flow Models} 
The idea of normalizing flows is based on change of variables, where the diffeomorphism would introduce a push forward map on the original distribution.
In order to have a simple Jacobian matrix determinant evaluation in the push forward probability, the diffeomorphisms are usually composed of successive simple invertible layers, e.g. coupling layers.
Even though flows could admit plausible image generation \cite{dinh2016density, kingma2018glow}, the numeric quantization error is one of the main challenges for lossless compression.
Hoogeboom et al \cite{hoogeboom2019integer} approached this problem by proposing a discrete flow for integers (IDF). 
IDF++~\cite{berg2020idf++} introduces several improvements to IDF. 
However, this very design imposes strict discrete constraints as well as strong constraints on coupling layer structure and thus undermines the expressive power of the flow model.
Ho et al \cite{ho2019compression} proposed a local bits-back coding technique (LBB) for more flexible flow models.
Nevertheless, it suffers from a large quantity of encoding/decoding operations on each layer, which greatly affects the efficiency of lossless compression. Full Jocabian computation may reduce coding times, but computing it is intractable.

The technique introduced in this work enables compression with any volume-preserving flows. Volume preserving flows only introduce limited constraints on general flows without hurting much expressive power, and more types of volume-preserving layers can be considered compared to IDF. 
With novel quantization methods, the invertible computation of flows without any numeric error is achieved, while the compression ratio can reach the entropy of distribution estimated with flows. 


\section{Method}
In this section, we introduce iVPF, a Numerical Invertible Volume Preserving Flow for efficient lossless compression. 
Note that the invertible network in a traditional flow $\mathbf{z} = f(\mathbf{x})$ is not exactly a bijective map as numerical errors are encountered in arithmetic computations of floating points~\cite{behrmann2020understanding}.
Although numerical errors are allowed in normal data generation modelling, no error shall occur in a lossless compression task.
iVPF can be built upon any volume preserving flows and is exactly bijective by eliminating floating-point numeral errors in forward and inverse computation, making it feasible for efficient lossless compression by encoding latent variables $\mathbf{z} = f(\mathbf{x})$.

\subsection{Recap: Volume Preserving Flows}
\label{sec:vpf}
Normalising flows are normally constructed between a continuous domain $\mathcal{X}$ and co-domain $\mathcal{Z}$.
Let $f: \mathcal{X} \to \mathcal{Z}$ be a bijective map defined by a flow model. The model distribution $p_X (\mathbf{x})$ of input data $\mathbf{x}$ can be expressed with the latent variable $\mathbf{z}$ such that
\begin{equation}
    \setlength{\abovedisplayskip}{3pt}
    \setlength{\belowdisplayskip}{3pt}
    \log p_X (\mathbf{x}) = \log p_Z(\mathbf{z}) + \log \Big|\frac{\partial \mathbf{z}}{\partial \mathbf{x}} \Big| \quad \mathbf{z} = f(\mathbf{x}),
\end{equation}
where $|\frac{\partial \mathbf{z}}{\partial \mathbf{x}}|$ is the absolute value of the determinant of the Jacobian matrix. 
The flow model is constructed with composition of flow layers such that $f = f_L \circ ... \circ f_2 \circ f_1$. Denoting $\mathbf{y}_{l} = f_l (\mathbf{y}_{l-1})$ and $\mathbf{y}_0 = \mathbf{x}, \mathbf{z} = \mathbf{y}_L$, we have $\log p_X(\mathbf{x}) = \log p_Z(\mathbf{z}) + \sum_{i=1}^L \log |\frac{\partial \mathbf{y}_l}{\partial \mathbf{y}_{l-1}}|$.
The distribution $p_Z(\mathbf{z})$ can be stipulated from any parametric distribution families, e.g., mixture of Gaussians, mixture of logistic distributions, etc. And the parameters of such distributions can be defined manually or learned jointly with flow parameters.
The inverse flow is computed as $f^{-1} = f_1^{-1} \circ ... \circ f_L^{-1}$.
As discussed in Sec. \ref{sec:why_vpf}, general non-volume preserving flows will cause practical issues when used for lossless compression, and so we focus on volume preserving flows.

Volume preserving flows have the property that $|\frac{\partial \mathbf{z}}{\partial \mathbf{x}}| = 1$, and therefore the density of $\mathbf{x}$ can be directly computed as $p_X (\mathbf{x}) = p_Z(\mathbf{z})$. 
A volume preserving flow model is a composition of volume preserving layers where $|\frac{\partial \mathbf{y}_{l}}{\partial \mathbf{y}_{l-1}}| = 1$ for any $\mathbf{y}_{l} = f_l (\mathbf{y}_{l-1}), l=1,2,...,L$. 
According to the following section, the volume preserving layers can be constructed from most general flow layers with limited constraints, thus its expressive power is close to the general flow.
Besides, volume preserving flows can use broader types of flow layers than IDF, implying better expressive power than IDF.

Next, we briefly review the three typical building layers adopted in the flow model.

\textbf{(1) Coupling layer.} 
For simplicity of notation, we drop the subscript for layer index in $f_l$, and use $\mathbf{x}$ and $\mathbf{z}$ as layer input and output.
A coupling layer with volume preserving is defined by
\begin{equation}
    \setlength{\abovedisplayskip}{3pt}
    \setlength{\belowdisplayskip}{3pt}
    \mathbf{z} = [\mathbf{z}_a, \mathbf{z}_b] = f(\mathbf{x}) = [\mathbf{x}_a, \mathbf{x}_b \odot \exp(\mathbf{s}(\mathbf{x}_a)) + \mathbf{t} (\mathbf{x}_a)], \\
    \label{eq:coupling}
\end{equation}
where $\mathbf{x} = [\mathbf{x}_a, \mathbf{x}_b]$, $\odot$ denotes element-wise multiplication of vectors, and the functions $\mathbf{s}(\cdot)$ and $\mathbf{t}(\cdot)$ can be modelled with neural networks so that
\begin{equation}
    \setlength{\abovedisplayskip}{3pt}
    \setlength{\belowdisplayskip}{3pt}
    \sum_{i} \mathbf{s}_i(\mathbf{x}_a) = 0
    \label{eq:vp_constraint_coupling}
\end{equation}
is the volume preserving constraint in which $\mathbf{s}_i$ is the $i$th element of $\mathbf{s}$.
Note that in IDF, all elements of $\mathbf{s}(\mathbf{x}_a)$ are set to be zero, conveying the limited expressive power.

If the dimension of $\mathbf{z}_a$ is zero, the outputs of $\mathbf{s}(\mathbf{x}_a), \mathbf{t}(\mathbf{x}_a)$ are learned linear coefficients irrelevant with $\mathbf{x}$, and it degrades to \textbf{invertible linear layer}.

\textbf{(3) Invertible $1 \times 1$ convolution.} This layer can be converted to matrix multiplication along channels~\cite{kingma2018glow}. Let $\mathbf{W}$ be the weight of a $1 \times 1$ convolution layer. It is the square matrix with $|\mathrm{det}(\mathbf{W})| = 1$ to ensure volume preserving.

In practice, $\mathbf{W}$ can be parameterised with LU decomposition as 
\begin{equation}
    \setlength{\abovedisplayskip}{3pt}
    \setlength{\belowdisplayskip}{3pt}
    \mathbf{W} = \mathbf{P} \mathbf{L} \mathbf{\Lambda} \mathbf{U}
\end{equation}
where $\mathbf{P}$ is the permutation matrix, $\mathbf{L}, \mathbf{U}$ are the lower and upper triangular matrix respectively whose diagonal are all ones, and $\mathbf{\Lambda}$ is the diagonal matrix with $|\mathrm{det}(\mathbf{\Lambda})| = 1$. For simplicity, we usually use $\mathbf{\Lambda} = \mathbf{I}$.

If $\mathbf{L}, \mathbf{U}$ are both identity matrices, then the $1 \times 1$ convolution layer reduces to \textbf{permutation layer}.

\textbf{(4) Factor-out layer.} To reduce the computation cost, the latent $\mathbf{z}$ can be modelled with a factored model $p(\mathbf{z}) = p(\mathbf{z}_L) p(\mathbf{z}_{L-1} | \mathbf{z}_{L}) ... p(\mathbf{z}_1 | \mathbf{z}_2, ..., \mathbf{z}_L)$, which can be modelled with volume preserving flow. We give an example of $L=2$~\cite{dinh2016density,hoogeboom2019integer} such that 
\begin{equation}
    \setlength{\abovedisplayskip}{3pt}
    \setlength{\belowdisplayskip}{3pt}
    [\mathbf{z}_1, \mathbf{y}_1] = f_1(\mathbf{x}), \quad \mathbf{z}_2 = f_2(\mathbf{y}_1), \quad \mathbf{z} = [\mathbf{z}_1, \mathbf{z}_2]
\end{equation}
where $f_1, f_2$ are both volume preserving flows. Then the probability of $\mathbf{x}$ is given by $p(\mathbf{x}) = p(\mathbf{z}_2) p(\mathbf{z}_1 | \mathbf{y}_1)$. 

\subsection{Volume Preserving Flows in Discrete Space}
\label{sec:why_vpf}

With normalizing flows, the input data may be encoded by first converting input data to latents, and then encoding latents with prior distribution. As all data are stored in binary formats, the data and latents should be discrete. However, normalizing flows naturally work for continuous distributions. In what follows, quantization is needed to make the latents discrete, and discrete bijections between input data and latents should be established. We show that volume preserving flows have the potential to hold the property.

\subsubsection{$k$-precision Quantization}
\label{sec:quantization}

In general practice, the continuous space is discretized to bins with volume $\delta$, where certain continuous data $\mathbf{x}$ is quantized to the centre of a certain bin $\Bar{\mathbf{x}}$.
The probability mass function is approximated by $P(\Bar{\mathbf{x}}) \approx p_X(\Bar{\mathbf{x}}) \delta$ for the corresponding bin.
Specifically, we use a similar quantization strategy as in IDF~\cite{hoogeboom2019integer} where the floating points are quantized to decimals with certain precision $k$ as
\begin{equation}
    \setlength{\abovedisplayskip}{3pt}
    \setlength{\belowdisplayskip}{3pt}
    \round{x} = \frac{\mathrm{round}(2^k \cdot x)}{2^k},
    \label{eq:quant}
\end{equation}
where $\mathrm{round}$ is the rounding operation. 
We refer to such quantization scheme as \textit{k-precision quantization}.
It is worth noting that with large $k$, $x$ and $\round{x}$ can be very close, since $|x - \round{x}| \le 2^{-k-1}$. 
And the bin volume can be computed as $\delta = 2^{-kd}$, where $d$ is the dimension of $\mathbf{x}$.

Due to the discrete nature of electronic data, e.g., 8 bit images, for any specific dataset to compress, there exists an integer $K$, such that for any $k>K$, $k$-precision quantization is sufficient to represent all possible data values.
As a result, we assume that $k$ is chosen large enough s.t. $\round{\mathbf{x}}=\mathbf{x}$ for the input data.
Moreover, it is worth noting that the same $k$-precision quantization scheme is employed for the data $\mathbf{x}$ and all of the latent variables output by each flow layer. And given discrete latents $\bar{\mathbf{z}} = \round{f({\mathbf{x}})}$, the inverse flow must recover the inputs such that $f^{-1}(\bar{\mathbf{z}}) \equiv \mathbf{x}$.

For a chosen $k$-precision quantization, the theoretical codelength of ${\mathbf{x}}$ is $-\log (p_X({\mathbf{x}}) \delta)$. 
With change of variables in flow, ${\mathbf{x}}$ is coded by coding $\bar{\mathbf{z}}$ with theoretical length $-\log (p_Z(\bar{\mathbf{z}}) \delta)$, where $p_Z(\bar{\mathbf{z}}) = p_X({\mathbf{x}}) |\frac{\partial \mathbf{x}}{\partial \mathbf{z}}(\bar{\mathbf{z}})|$.


\subsubsection{Invertibility of Volume Preserving Flows}
\label{sec:propositions}
The essential difference of non-volume preserving flows from volume preserving flows is that the absolute Jacobian determinant is not constant 1 for all data points.
We will see that this characteristic of non-volume preserving flows can lead to either waste of bits for compression or failure of bijection induction that is needed for lossless compression with flows.
As a result, volume preserving flows are preferred for efficient lossless compression.

From Proposition~\ref{cl:long}, for a data point $\mathbf{x}_0$ with $|\frac{\partial \mathbf{z}}{\partial \mathbf{x}}(\mathbf{x}_0)| > 1$, the codelength by coding in the latent space can be longer than it needs to be which leads to suboptimal compression.
From Proposition~\ref{cl:inv}, for a data point $\mathbf{x}_0$ with $|\frac{\partial \mathbf{z}}{\partial \mathbf{x}}(\mathbf{x}_0)| < 1$, the bijection between domain and co-domain with the same discretization scheme cannot be guaranteed, which means an encoded data point cannot be correctly decoded.
The proofs of two propositions can be seen in the Appendix \ref{app:proof_inv}.

\begin{proposition}\label{cl:long}
Let $f$ be a smooth bijection from $\mathcal{X}$ to $\mathcal{Z}$.
Assume for $\mathbf{x}_0$, it holds that $|\frac{\partial \mathbf{z}}{\partial \mathbf{x}}(\mathbf{x}_0)| > 1$.
Then there exists an integer $K$, for any $k$-precision discretisation scheme where $k>K$, we have $-\log (p_Z(\round{f(\mathbf{x}_0)})\delta) > -\log (p_X(\mathbf{x}) \delta)$. 
\end{proposition}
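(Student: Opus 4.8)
The plan is to cancel the common bin volume $\delta$ appearing on both sides, thereby reducing the codelength inequality to a strict pointwise density comparison, and then to transport the exact change-of-variables identity through the quantization limit using continuity of the prior $p_Z$.

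First I would observe that the same $k$-precision scheme is applied to the data and to the latents, and that $f$ is a bijection between $\mathcal{X}$ and $\mathcal{Z}$, both of dimension $d$; hence the bin volume $\delta = 2^{-kd}$ is identical on the two sides. Since $-\log$ is strictly decreasing, the claim $-\log(p_Z(\round{f(\mathbf{x}_0)})\delta) > -\log(p_X(\mathbf{x}_0)\delta)$ is equivalent to $p_Z(\round{f(\mathbf{x}_0)}) < p_X(\mathbf{x}_0)$. Next I would apply the change-of-variables formula at the single point $\mathbf{x}_0$ — valid because $f$ is a smooth bijection — to get $p_X(\mathbf{x}_0) = p_Z(f(\mathbf{x}_0))\,|\frac{\partial \mathbf{z}}{\partial \mathbf{x}}(\mathbf{x}_0)|$. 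Writing $c := |\frac{\partial \mathbf{z}}{\partial \mathbf{x}}(\mathbf{x}_0)|$, the hypothesis $c > 1$ together with $p_Z(f(\mathbf{x}_0)) > 0$ (if $p_Z(f(\mathbf{x}_0)) = 0$ then $p_X(\mathbf{x}_0) = 0$, both codelengths are infinite, and there is nothing to prove) yields the strict gap $p_X(\mathbf{x}_0) = c\,p_Z(f(\mathbf{x}_0)) > p_Z(f(\mathbf{x}_0))$.

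Finally I would close this gap with a limiting argument. By the definition of $k$-precision quantization, $\|\round{f(\mathbf{x}_0)} - f(\mathbf{x}_0)\|_\infty \le 2^{-k-1}$, so $\round{f(\mathbf{x}_0)} \to f(\mathbf{x}_0)$ as $k \to \infty$; since $p_Z$ is continuous (in practice a mixture of Gaussians, logistics, etc.), $p_Z(\round{f(\mathbf{x}_0)}) \to p_Z(f(\mathbf{x}_0))$. Because $p_Z(f(\mathbf{x}_0)) < p_X(\mathbf{x}_0)$ with a strict margin, there is an integer $K$ such that $p_Z(\round{f(\mathbf{x}_0)}) < p_X(\mathbf{x}_0)$ for all $k > K$, which is exactly the assertion. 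The only delicate point — the step I expect to do the real work — is this passage from the identity at $f(\mathbf{x}_0)$ to the evaluation at the quantized latent $\round{f(\mathbf{x}_0)}$: one must invoke continuity of $p_Z$ and the strict margin $c > 1$ to absorb the $O(2^{-k})$ perturbation of the evaluation point, and handle the degenerate zero-density case separately; the remaining manipulations (cancelling $\delta$, monotonicity of $\log$, change of variables) are routine.
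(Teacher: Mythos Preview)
Your proposal is correct and follows essentially the same route as the paper's proof: reduce the codelength inequality to the density inequality $p_Z(\round{f(\mathbf{x}_0)}) < p_X(\mathbf{x}_0)$, apply change of variables at $\mathbf{x}_0$ to obtain the strict gap $p_Z(f(\mathbf{x}_0)) < p_X(\mathbf{x}_0)$ from $|\partial\mathbf{z}/\partial\mathbf{x}|>1$, and then invoke continuity of $p_Z$ together with the quantization bound $\|\round{f(\mathbf{x}_0)}-f(\mathbf{x}_0)\|\le 2^{-k-1}$ to propagate the strict inequality to the quantized latent for all sufficiently large $k$. The only addition relative to the paper is your explicit handling of the degenerate case $p_Z(f(\mathbf{x}_0))=0$, which the paper's proof omits.
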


\begin{proposition}\label{cl:inv}
Let $f$ be a smooth bijection from $\mathcal{X}$ to $\mathcal{Z}$.
Assume for $\mathbf{x}_0$, it holds that $|\frac{\partial \mathbf{z}}{\partial \mathbf{x}}(\mathbf{x}_0)| < 1$.
Then there exists an integer $K$, for any $k>K$, $f$ cannot induce a bijection between the discretised domain $\bar{\mathcal{X}}$ and discretised co-domain $\bar{\mathcal{Z}}$.
\end{proposition}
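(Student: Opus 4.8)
The plan is to show that when $|\frac{\partial \mathbf{z}}{\partial \mathbf{x}}(\mathbf{x}_0)| < 1$, the map $f$ locally compresses volume, so that a small neighbourhood of $\mathbf{x}_0$ containing many grid points of $\bar{\mathcal{X}}$ is sent into a region of $\mathcal{Z}$ that contains strictly fewer grid points of $\bar{\mathcal{Z}}$; by pigeonhole, the induced discrete map $\round{f(\cdot)}$ cannot be injective on that neighbourhood, hence is not a bijection. First I would use smoothness of $f$ (hence continuity of $\mathbf{x} \mapsto |\frac{\partial \mathbf{z}}{\partial \mathbf{x}}(\mathbf{x})|$) to fix a constant $c$ with $|\frac{\partial \mathbf{z}}{\partial \mathbf{x}}(\mathbf{x}_0)| < c < 1$ and a radius $r>0$ such that $|\frac{\partial \mathbf{z}}{\partial \mathbf{x}}(\mathbf{x})| \le c$ for all $\mathbf{x}$ in the closed ball $B = \overline{B}(\mathbf{x}_0, r)$. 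Then, by the change-of-variables formula, $\mathrm{vol}(f(B)) = \int_B |\frac{\partial \mathbf{z}}{\partial \mathbf{x}}(\mathbf{x})|\, d\mathbf{x} \le c \cdot \mathrm{vol}(B)$.

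Next I would count grid points. With $k$-precision quantization the grid spacing is $2^{-k}$ in each of the $d$ coordinates, so the number of grid points of $\bar{\mathcal{X}}$ inside $B$ is $2^{kd}\,\mathrm{vol}(B)\,(1 + o(1))$ as $k \to \infty$, and similarly the number of grid points of $\bar{\mathcal{Z}}$ inside any fixed measurable set $S$ with nonempty interior and negligible boundary is $2^{kd}\,\mathrm{vol}(S)\,(1+o(1))$. Applying this with $S = f(B)$ (which is compact with piecewise-smooth boundary since $f$ is a smooth bijection and $B$ is a closed ball), the image region $f(B)$ contains at most $2^{kd}\,\mathrm{vol}(f(B))(1+o(1)) \le 2^{kd}\, c\, \mathrm{vol}(B)(1+o(1))$ grid points of $\bar{\mathcal{Z}}$. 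Since every point $\round{f(\bar{\mathbf{x}})}$ for $\bar{\mathbf{x}}$ a grid point in $B$ lies within distance $\tfrac{\sqrt d}{2}2^{-k}$ of $f(\bar{\mathbf{x}}) \in f(B)$, it lies in the slightly enlarged region $f(B)^{+}$ (the $\tfrac{\sqrt d}{2}2^{-k}$-neighbourhood of $f(B)$), whose volume still tends to $\mathrm{vol}(f(B))$; so the number of possible image grid points is at most $2^{kd}\, c'\, \mathrm{vol}(B)(1+o(1))$ for some $c' < 1$. Comparing the two counts, for all $k$ larger than some $K$ the source contains strictly more grid points than the target region can hold, so $\round{f(\cdot)}$ restricted to $\bar{\mathcal{X}} \cap B$ is not injective, and therefore $f$ does not induce a bijection between $\bar{\mathcal{X}}$ and $\bar{\mathcal{Z}}$.

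The main obstacle is making the two lattice-point counts rigorous and uniform: one needs that the number of $2^{-k}$-spaced lattice points in a nice compact set $S$ is $2^{kd}(\mathrm{vol}(S) + O(2^{-k}\,\mathrm{area}(\partial S)))$, which requires controlling the boundary $\partial(f(B))$ — guaranteed here because $f$ is a diffeomorphism and $\partial B$ is smooth — and that the enlargement from $f(B)$ to $f(B)^{+}$ does not swamp the strict gap coming from $c<1$; choosing $r$ first and only then letting $k \to \infty$ keeps the $o(1)$ terms genuinely subordinate to the fixed factor $c$. A secondary point to handle cleanly is the edge effect of grid points of $B$ lying near $\partial B$ whose images leave $f(B)$, but these contribute only an $O(2^{-k})$-fraction and are absorbed into the same $(1+o(1))$ estimate. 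One could alternatively phrase the whole argument measure-theoretically via $\mathrm{vol}(f(B)) < \mathrm{vol}(B)$ together with the fact that a bijection of grid points would force equality of normalized counts in the limit, obtaining a contradiction without tracking constants explicitly.
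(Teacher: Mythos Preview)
Your argument is correct and follows essentially the same line as the paper's proof: both use continuity of the Jacobian determinant to find a neighbourhood on which it stays below $1$, compare the volume of that neighbourhood with the volume of its image via change of variables, and conclude by a pigeonhole count of discretisation points that $\round{f(\cdot)}$ cannot be injective for fine enough grids. Your version is in fact more careful than the paper's sketch, explicitly handling the rounding displacement via the enlarged set $f(B)^{+}$ and the boundary $o(1)$ terms in the lattice-point count, whereas the paper simply asserts that the smaller-volume image contains strictly fewer bins.
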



\subsection{Numerical Invertible Volume Preserving Flow Layer}
\label{sec:ivpf}
The previous section illustrated the necessity of efficient coding with volume preserving flows. 
In this subsection, we introduce Numerical Invertible Volume Preserving Flows (iVPF), which are derived from existing volume preserving flows with $k$-precision quantization and novel arithmetic computation algorithms of flow layers.
Denote by $\Bar{f}$ the iVPF derived from the volume preserving flow $f$.
An iVPF model is constructed with composition of numerical invertible volume preserving flow layers such that $\Bar{f} = \Bar{f}_L \circ ... \circ \Bar{f}_1$, where $\mathbf{y}_{l-1} \equiv \Bar{f}_l^{-1} (\Bar{f}_l(\mathbf{y}_{l-1}))$. 
It holds that $\mathbf{x} \equiv \Bar{f}^{-1}(\Bar{f} (\mathbf{x}))$ for all quantized input data $\mathbf{x}$ with the small error between $\Bar{\mathbf{z}} = \Bar{f} (\mathbf{x})$ and $\mathbf{z} = f(\mathbf{x})$, therefore the efficient coding algorithm with iVPF is expected. 
As discussed below, the inputs and outputs of each flow layer in iVPF should be quantized such that $\mathbf{y}_l \equiv \round{\mathbf{y}_l}$ for all $l=0,1,...,L$, and so as $\mathbf{z}$. 

\begin{algorithm}[t]
\small
\caption{Modular Affine Transformation (MAT)}
\textbf{Forward MAT}: $\mathbf{z}_b = \bar{f}(\mathbf{x}_b) = \mathbf{s} \odot \mathbf{x}_b + \mathbf{t}$ given $(\mathbf{x}_b, r)$.

\begin{algorithmic}[1]
\STATE $\mathbf{x}_b \gets 2^k \cdot \mathbf{x}_b$;
\STATE Set $m_{0} = m_{d_b} = 2^C$, assert $0 \le r < m_{0}$;
\FOR {$i = 1,2,...,d_b-1$}
\STATE $m_i \gets \mathrm{round} (m_0 / (\prod_{j=1}^i s_j) )$; $\quad // s_i \approx m_{i-1} / m_{i}$
\ENDFOR
\FOR {$i = 1,2,..., d_b$}
\STATE $v \gets x_{i} \cdot m_{i-1} + r$;
\STATE $y_i \gets \lfloor v / m_i \rfloor, \quad r \gets v \mod m_i$; $\quad // y_i \approx x_i \cdot s_i $
\ENDFOR
\STATE $\mathbf{y}_b = [y_1, ..., y_{d_b}]^\top$;
\STATE $\mathbf{z}_b \gets \mathbf{y}_b / 2^k + \round{\mathbf{t}}$.
\RETURN $\mathbf{z}_b, r$.
\end{algorithmic}

\vspace{6pt}

\textbf{Inverse MAT}: $\mathbf{x}_b = \bar{f}^{-1}(\mathbf{z}_b) =  (\mathbf{z}_b - \mathbf{t}) \oslash \mathbf{s}$ given $(\mathbf{z}_b, r)$.

\begin{algorithmic}[1]
\STATE $\mathbf{y}_b \gets 2^k \cdot (\mathbf{z}_b - \round{\mathbf{t}})$;
\STATE Set $m_{0} = m_{d_b} = 2^C$, assert $0 \le r < m_{0}$;
\FOR {$i = 1,2,...,d_b-1$}
\STATE $m_i \gets \mathrm{round} (m_0 / (\prod_{j=1}^i s_j) )$; $\quad // s_i \approx m_{i-1} / m_{i}$
\ENDFOR
\FOR {$i = d_b, ..., 2, 1$}
\STATE $v \gets y_{i} \cdot m_{i} + r$;
\STATE $x_i \gets \lfloor v / m_{i-1} \rfloor, r \gets v \mod m_{i-1}$; $ // x_i \approx y_i / s_i $
\ENDFOR
\STATE $\mathbf{x}_b \gets \mathbf{x}_b / 2^k$.
\RETURN $\mathbf{x}_b, r$.
\end{algorithmic}

\label{alg:coupling}
\end{algorithm}

\subsubsection{Numerical Invertible Coupling Layer with Modular Affine Transformation}
\label{sec:coupling}

Without loss of generality, denote by $\mathbf{s} = \exp(\mathbf{s}(\mathbf{x}_a)), \mathbf{t} = \mathbf{t} (\mathbf{x}_a)$. Given $\mathbf{x}_b \in \mathbb{R}^{d_b}$, $\mathbf{z}_b \in \mathbb{R}^{d_b}$ is computed by affine transformation such that $\mathbf{z}_b = \mathbf{s} \odot \mathbf{x}_b + \mathbf{t}$, and the inverse is $\mathbf{x}_b =  (\mathbf{z}_b - \mathbf{t}) \oslash \mathbf{s}$\footnote{$\oslash$ denotes element-wise division.}.
It is clear that traditional multiplication and division do not ensure invertibility as $\round{(\round{\mathbf{s} \odot \mathbf{x}_b + \mathbf{t}} - \mathbf{t}) \oslash \mathbf{s}} \neq \mathbf{x}_b$.
Denote by $z_i, x_i, s_i$ and $t_i$ the $i$th element of $\mathbf{z}_b, \mathbf{x}_b, \mathbf{s}$ and $\mathbf{t}$ respectively.
Note that $\prod_{i=1}^{d_b} s_i = 1$ in iVPF. 
By introducing an auxiliary variable $r$, we give a novel numerical invertible affine transformation algorithm shown in Alg. \ref{alg:coupling}. 
Since the numerical invertibility of the affine transformation is achieved by modular arithmetic, we refer to the induced numerical invertible affine transformation as \textit{Modular Affine Transformation} (MAT). 

\begin{figure}[t]
\begin{center}
\includegraphics[width=0.45\textwidth]{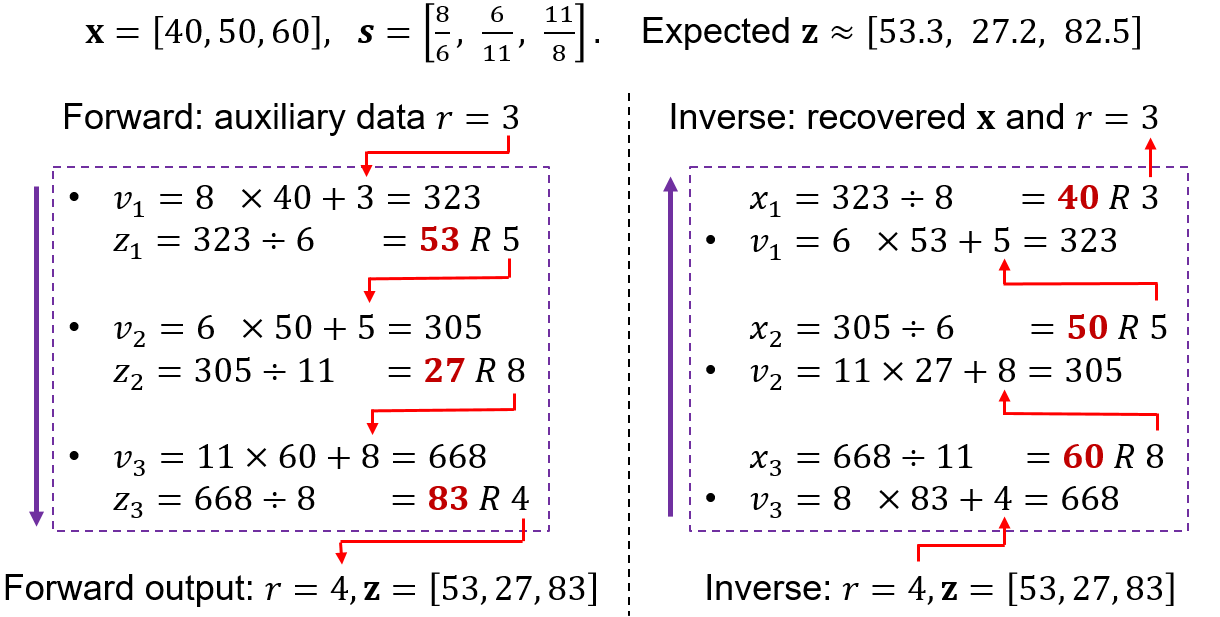}\\
\end{center}
\caption{Illustration of how MAT works with Alg. \ref{alg:coupling}. $a \div b = q \ R \ r$ denotes the division with remainder such that $a = b \cdot q + r$. For simplicity, the inputs are rescaled to integers as Line 1.}
\label{fig:divide}
\end{figure}

The proof of inveribility of MAT is shown in Appendix~\ref{app:proof_mat}. Since the input to both forward and inverse MAT are $k$-precision floating numbers, they are converted to integers in Line 1. 
In Line 2-3, $s_i$ is approximated with fraction $m_{i-1} / m_i$, and therefore $m_{d_b} = m_0$ as the volume preserving constraint holds. 
It is worth noting that $\mathrm{round}$ is the normal rounding operation except for outputting 1 for the input in $(0,1]$ to avoid $m_i$ being 0.
Lines 6-9 are implemented by division with remainders. In Line 7, $r \in [0, m_{i-1})$, so we have $y_i = \lfloor (x_i \cdot m_{i-1} + r) / m_i \rfloor \approx s_i \cdot x_i$ and $x_i = \lfloor (y_i \cdot m_i + r) / m_{i-1} \rfloor \approx x_i / s_i (r \in [0, m_i))$. Formally, the error of computation of $\mathbf{z}_b$ with Alg. \ref{alg:coupling} is
\begin{equation}
    \setlength{\abovedisplayskip}{3pt}
    \setlength{\belowdisplayskip}{3pt}
    |z_i - (s_i x_i + t_i)| = O(2^{-k}, 2^{-C})
\label{eq:coupling_error}
\end{equation}
and the detailed derivation of Eq. (\ref{eq:coupling_error}) is shown in Appendix~\ref{app:error}. 
A simple illustration on the correctness of the algorithm is shown in Fig. \ref{fig:divide}. 

To reduce error, we set $C=16$ (in Line 3). 
Note that an auxiliary value $r$ is involved in the algorithm. 
When using the iVPF model, we can simply initialise $r=0$, update $r$ at each coupling layer and finally store $r$. 
When using the inverse iVPF model, the stored $r$ is fed into the model and is updated at each inverse coupling layer. When the original data are restored, $r$ is reduced to zero. As $r \in [0, 2^C)$, storing $r$ just needs additional $C$ bits, which is negligible in compressing high-dimensional data.

\subsubsection{Numerical Invertible $1 \times 1$ Convolution Layer}
\label{sec:conv}

Let $\mathbf{x}, \mathbf{z} \in \mathbb{R}^c$ be input and output of the convolution layer along the feature map at a certain pixel.
Then we have $\mathbf{z} = \mathbf{W} \mathbf{x} = \mathbf{P} \mathbf{L} \mathbf{\Lambda} \mathbf{U} \mathbf{x}$. 

\textbf{(1) Matrix multiplication with upper triangular matrix $\mathbf{U}$.} 
Denote by $x_i$ and $z_i$ the $i$th element of $\mathbf{x}$ and $\mathbf{z}$ respectively.
And let $u_{ij}$ be the elements of $\mathbf{U}$. 
In iVPF, the forward mode matrix multiplication is $z_i = x_i + \round{\sum_{j=1+1}^c u_{ij} x_j}, i = 1,2,...,c-1$ and $z_c = x_c$. And the inverse matrix multiplication is computed in an auto-regressive manner in which $x_c, x_{c-1}, ..., x_1$ are computed recursively such that
\begin{equation}
\setlength{\abovedisplayskip}{3pt}
\setlength{\belowdisplayskip}{3pt}
\begin{split}
    x_c &= z_c, \\
    x_i &= z_i - \round{\sum_{j=i+1}^c u_{ij} x_j}, \quad i = c-1, ..., 1.
\end{split}
\label{eq:upper}
\end{equation}

\textbf{(2) Matrix multiplication with lower triangular matrix $\mathbf{U}$.} 
Denote by $l_{ij}$ the elements of $\mathbf{L}$.
Then the forward mode matrix multiplication is $z_1 = x_1, z_i = x_i + \round{\sum_{j=1}^{i-1} l_{ij} x_j}, i = 2,...,c$. And the inverse mode is performed by recursively computing $x_1, x_{2}, ..., x_c$ such that
\begin{equation}
\setlength{\abovedisplayskip}{3pt}
\setlength{\belowdisplayskip}{3pt}
\begin{split}
    x_1 &= z_1, \\
    x_i &= z_i - \round{\sum_{j=1}^{i-1} l_{ij} x_j}, \quad i = 2,...,c.
\end{split}
\label{eq:lower}
\end{equation}

\textbf{(3) Matrix multiplication with $\mathbf{\Lambda}$.} Denote $\lambda_i$ as the $i$-th diagonal element of $\mathbf{\Lambda}$, we have $z_i = \lambda_i x_i$. It is clear that $|\prod_{i=1}^c \lambda_i| = 1$, thus we use MAT shown in Alg. \ref{alg:coupling}.

\textbf{(4) Matrix multiplication with $\mathbf{P}$.} As $\mathbf{P}$ is the permutation matrix, the elements in $\mathbf{x}$ can be directly permuted to $\mathbf{z}$, and $\mathbf{x}$ can be recovered by applying the inverse permutation $\mathbf{P}^{-1}$ on $\mathbf{z}$.

\begin{figure*}[t]
\begin{center}
\includegraphics[width=0.85\textwidth]{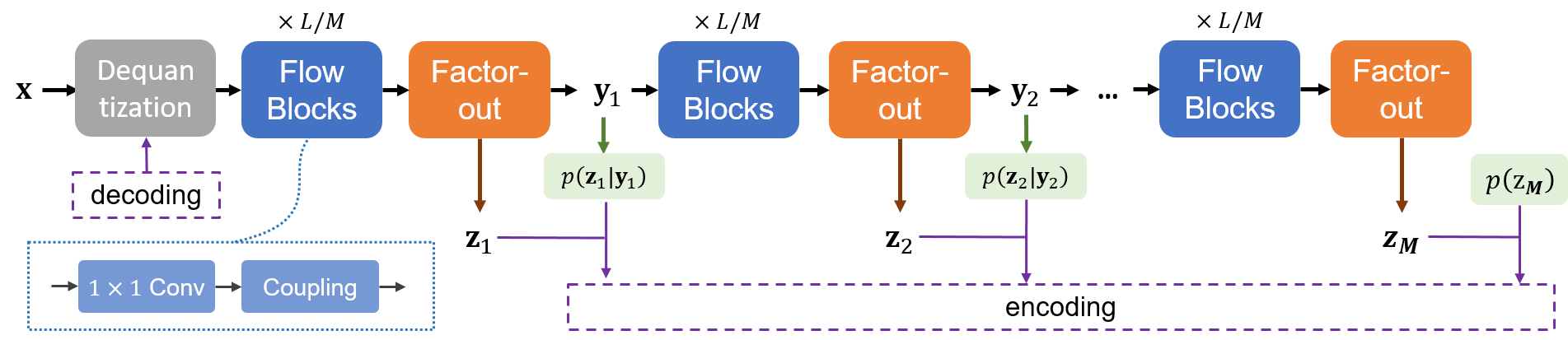}\\
\end{center}
\caption{Illustration of the iVPF architecture. We just show the encoding phase with iVPF model and the decoding phase is performed inversely. The latents in the model are $\mathbf{z} = [\mathbf{z}_1, ..., \mathbf{z}_M]$ and the density function is $p(\mathbf{z}) = p(\mathbf{z}_M) \prod_{l=1}^{M-1} p(\mathbf{z}_l | \mathbf{y}_l)$.}
\label{fig:flow}
\end{figure*}

\subsection{Error Analysis}

Consider the continuous flow layer $f_l$ and the corresponding iVPF layer $\bar{f}_l$. 
With the same input $\mathbf{y}_{l-1}$, for coupling layer, $|\bar{f}_l(\mathbf{y}_{l-1}) - f_l(\mathbf{y}_{l-1})| = O(2^{-k}, 2^{-C})$; for other layers, $|\bar{f}_l(\mathbf{y}_{l-1}) - f_l(\mathbf{y}_{l-1})| = O(2^{-k})$. 
If we further assume $f$ is Lipschitz continuous, it holds that $|f_l(\bar{\mathbf{y}}_{l-1}) - f_l(\mathbf{y}_{l-1})| = O(|\bar{\mathbf{y}}_l - \mathbf{y}_l|)$. Thus for the volume preserving flow model with $\mathbf{y}_0 = \mathbf{x}, \mathbf{z} = \mathbf{y}_L, \mathbf{y}_l = f_l (\mathbf{y}_{l-1})$, and the corresponding iVPF model with $\bar{\mathbf{y}}_0 = \mathbf{x}, \bar{\mathbf{z}} = \bar{\mathbf{y}}_L, \bar{\mathbf{y}}_l = \bar{f}_l (\bar{\mathbf{y}}_{l-1})$, we have $|\bar{\mathbf{y}}_l - \mathbf{y}_l| = |\bar{\mathbf{y}}_{l-1} - \mathbf{y}_{l-1}| + O(2^{-k}, 2^{-C})$ and therefore $|\bar{\mathbf{z}} - \mathbf{z}| = O(L2^{-k}, L2^{-C})$. Detailed derivation can be found in the supplementary material. If $L, 2^{-k}, 2^{-C}$ are relatively small, the latents generated with the iVPF model are able to approximate the true distribution.

\begin{algorithm}[t]
\small
\caption{Coding with iVPF}
\textbf{Procedure} ENCODE($\mathbf{x}_0$)

\begin{algorithmic}[1]
\STATE Pre-process: $\mathbf{x} \gets \round{\mathbf{x}_0 / 2^h - 0.5}$, set $r=0$;
\STATE Decode $\mathbf{u} \sim U(0, 2^{-h}) \delta$;
\STATE $\bar{\mathbf{x}} \gets \mathbf{x} + \mathbf{u}$;
\STATE Compute $\bar{\mathbf{z}} \gets \bar{f}(\bar{\mathbf{x}})$ with the iVPF model, in which $r$ is updated with forward MAT;
\STATE Encode $\bar{\mathbf{z}} \sim p_Z(\bar{\mathbf{z}}) \delta$; store $r$.
\end{algorithmic}

\vspace{8pt}

\textbf{Procedure} DECODE()

\begin{algorithmic}[1]
\STATE Decode $\bar{\mathbf{z}} \sim p_Z(\bar{\mathbf{z}}) \delta$; get $r$.
\STATE Compute $\bar{\mathbf{x}} \gets \bar{f}^{-1}(\bar{\mathbf{x}})$ with the inverse iVPF model, in which $r$ is updated with inverse MAT;
\STATE $\mathbf{x} \gets \lfloor 2^k \cdot \bar{\mathbf{x}} \rfloor / 2^h, \quad \mathbf{u} \gets \bar{\mathbf{x}} - \mathbf{x}$;
\STATE Encode $\mathbf{u} \sim U(0, 2^{-h})\delta$;
\STATE Post-process: $\mathbf{x}_0 \gets 2^h \cdot (\mathbf{x} + 0.5)$;
\RETURN $\mathbf{x}_0$.
\end{algorithmic}

\label{alg:ivpf}
\end{algorithm}

\subsection{Bits-Back Coding with Dequantization}

Consider the case when elements of the $d$-dimensional input data are all integers, i.e., $\mathbf{x}_0 \in \{0, 1, ..., 2^h-1\}^d$ (for byte data or 8-bit images, $h=8$). In practical applications, $\mathbf{x}_0$ should be pre-processed by normalisation and quantization such that $\mathbf{x} = \round{\mathbf{x}_0 / 2^h - 0.5}$. 
To avoid information loss during quantization, it is set that $k \ge h$. 
After normalisation, the encoding process is (1) getting latent variable $\bar{\mathbf{z}} = \Bar{f}(\mathbf{x})$ with iVPF model; (2) encoding $\bar{\mathbf{z}}$ with prior $P(\bar{\mathbf{z}}) \approx p_Z(\bar{\mathbf{z}}) \delta$ using rANS coding algorithm~\cite{duda2013asymmetric}; and (3) saving auxiliary data generated by the iVPF model (see Sec. \ref{sec:coupling}). 
The decoding process is (1) loading auxiliary data; (2) decoding $\bar{\mathbf{z}}$ with $P(\bar{\mathbf{z}}) \approx p_Z(\bar{\mathbf{z}}) \delta$; and (3) recovering $\mathbf{x}$ with the inverse iVPF given $\bar{\mathbf{z}}$. 
In the encoding and decoding process, the volume of quantization bins is $\delta = 2^{-kd}$.
As a result, the expected entropy coding length of $\mathbf{x}$ is approximately $-\log p_Z(\bar{\mathbf{z}}) + kd + C$, where $C=16$ denotes the bit length of the auxiliary data (see Sec. \ref{sec:coupling}).

However, the coding length with the above coding technique is significantly affected by the decimal precision $k$. 
If $k > h$, the coding length is dramatically increased with larger $k$. 
If $k (= h)$ is relatively small, the error between $\Bar{\mathbf{z}} = \Bar{f} (\mathbf{x})$ and the latents with continuous volume preserving flow $\mathbf{z} = f(\mathbf{x})$ are large.
Then the density estimation of $p_X(\mathbf{x})$ will be far from the true distribution. 
In this case, the expected codelength will be large.

To resolve this issue, we adopt the idea of LBB~\cite{ho2019compression} by dequantizing $\mathbf{x}$ to certain decimal precision $k$ with the bits-back coding scheme. 
By using $k-h$ auxiliary bits, we can decode $\mathbf{u} \in \mathbb{R}^d$ from the uniform distribution $P(\mathbf{u} | \mathbf{x}) = U(0, 2^{-h}) \delta$\footnote{Entries in $\mathbf{x}_0$ are integers, and $\mathbf{x} = \round{\mathbf{x}_0 / 2^h - 0.5}$ is discretized with bin size $2^{-h}$.} with quantization bin size $\delta = 2^{-kd}$, and put $\mathbf{x} + \mathbf{u}$ in the iVPF model such that $\bar{\mathbf{z}} = \bar{f}(\mathbf{x} + \mathbf{u})$. Then expected codelength of $\mathbf{x}$ is 
\begin{equation}
\setlength{\abovedisplayskip}{4pt}
\setlength{\belowdisplayskip}{4pt}
\begin{split}
    &\mathbb{E}_{q(\mathbf{u} | \mathbf{x})} [- \log (p_X(\mathbf{x} + \mathbf{u}) \delta) + \log (q(\mathbf{u | x}) \delta)] + C \\
    =& \mathbb{E}_{q(\mathbf{u} | \mathbf{x})} [- \log p_Z(\bar{\mathbf{z}}) ] + hd + C
\end{split}
\label{eq:codelength}
\end{equation}
which is irrelevant with the decimal precision $k$. Thus we can set higher $k$ to ensure accurate density estimation of $\mathbf{x}$. 

The overall lossless compression algorithm with iVPF is summarised in Alg. \ref{alg:ivpf}. 



\begin{figure*}[t]
\begin{center}
\includegraphics[width=0.85\textwidth]{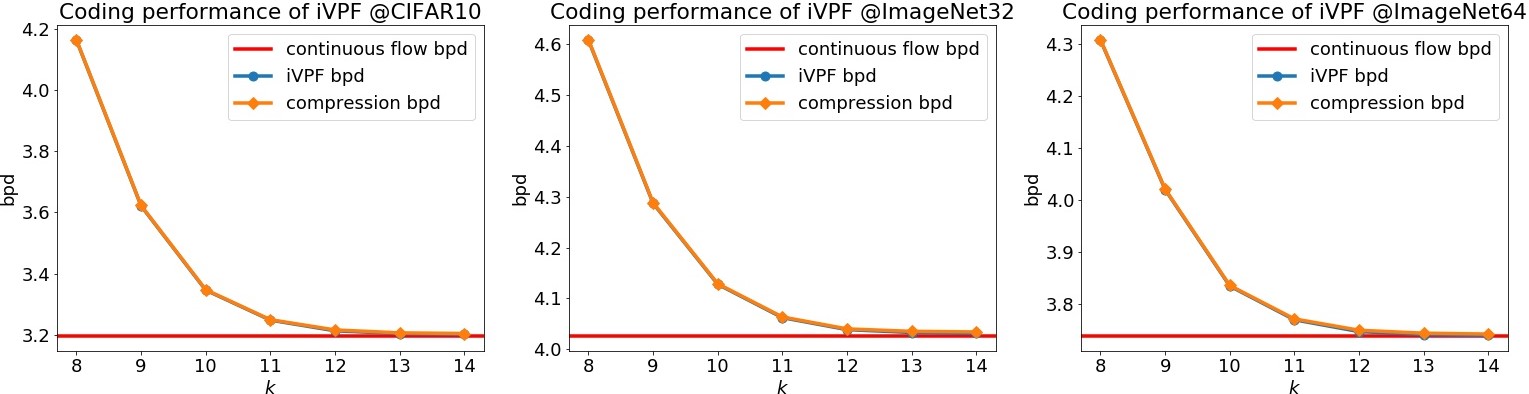}\\
\end{center}
\caption{Compression performance of iVPF in terms of bpd. The negative log-likelihood (the lower bound of compression length) of continuous flow and the iVPF are also shown. Orange lines with diamond overlaps with the blue with circle ones as they are very close.}
\label{fig:compression}
\end{figure*}

\section{Experiments} 

In this section, we perform extensive experiments to evaluate the effectiveness of lossless compression with iVPF model. 
We first introduce the continuous volume preserving flow architecture used for iVPF, and test the performance of iVPF on both low resolution images and real-world high resolution ones.

\subsection{Flow Architecture}
\label{sec:flow_arch}

The architecture is a continuous flow derived from IDF~\cite{hoogeboom2019integer} and IDF++~\cite{berg2020idf++}, which is shown in Fig. \ref{fig:flow}. The architecture consists of $L$ stacked flow layer blocks, which are split into $M$ levels. 

The flow input is normalised to $[-0.5, 0.5]$. Each flow block is composition of volume preserving flow layers including $\mathrm{Coupling-1\times 1 Conv}$. 
For $\mathrm{Coupling}$ layer, we follow ~\cite{berg2020idf++} and split the input by $3:1$ where the dimension of $\mathbf{x}_b$ is $d_b = d/4$, while $1:1$ split reaches similar results. $\mathbf{s}(\mathbf{x}_a)$ and $\mathbf{t}(\mathbf{x}_a)$ are predicted with DenseNets~\cite{huang2017densely}, and the weights are shared except for the last layer. 
The dense blocks in the Densenet are the same as IDF++ in which Swish activation~\cite{howard2019searching} and group norm~\cite{wu2018group} are used. 
Denote by $\mathbf{s}_0$ and $\mathbf{t}_0$ the outputs of the DenseNet, the final coupling coefficients are computed by $\mathbf{s}(\mathbf{x}_a) = \alpha [\tanh (\mathbf{s}_0) - \mathrm{mean} (\tanh(\mathbf{s}_0))], \mathbf{t}(\mathbf{x}_a) = \alpha \mathbf{t}_0$, where $\alpha$ is a learned parameter and $\mathrm{mean}$ denotes the mean of vector elements to make the flow volume-preserving. 
We initialise $\alpha$ to be zero so that the coupling layer starts with identity transformation (see Eq. (\ref{eq:coupling})) for training~\cite{berg2020idf++}.
$\tanh$ is used to constrain the scale for stability. 
For $\mathrm{1 \times 1 Conv}$ layer, the weights are decomposed to $\mathbf{W} = \mathbf{P} \mathbf{L} \mathbf{U}$.
$\mathbf{P}$ is the permutation matrix which is initialised once and kept fixed; $\mathbf{L}$ and $\mathbf{U}$ are lower and upper triangular matrices respectively to learn.

At the end of each level, a factor-out layer is introduced, except for the final level. The architecture of factor-out layer is exactly the same as IDF++~\cite{berg2020idf++}.

In the training phase, the continuous flow is trained with variational dequantization to make the input data continuous~\cite{hoogeboom2020learning}. To be specific, a dequantization noise $\mathbf{u}$ is sampled with $q(\mathbf{u} | \mathbf{x}) = U(0, 2^{-h})$ and the training objective is minimising $\mathbb{E}_{q(\mathbf{u} | \mathbf{x})} [- \log p_X(\mathbf{x} + \mathbf{u}) + \log q(\mathbf{u | x})]$. After training, The iVPF model is derived from the continuous flow, in which the flow layer are computed with Sec. \ref{sec:ivpf}, and rANS~\cite{duda2013asymmetric} is adopted for encoding and decoding. Mindspore tool\footnote{\url{https://www.mindspore.cn/}} is used for implementation.

\subsection{Dataset and Training Details}

Like general lossless compression algorithms~\cite{hoogeboom2019integer,berg2020idf++,townsend2019hilloc,ho2019compression}, we use CIFAR10, ImageNet32 and ImageNet64 for training and evaluation. 
For the CIFAR10 and ImageNet32 datasets, the number of flow blocks is $L=24$ with $M=3$ levels, while in ImageNet64, $L=32$ and $M=4$. 
The depth of Densenet is $12$ in each coupling and factor-out layer. 
Adamax~\cite{kingma2014adam} is used for training, and the initial learning rate of each dataset is $2 \times 10^{-3}$ and gradually decreasing after each epoch. 
We train approximately 500 epochs for CIFAR10, 70 epochs for ImageNet32 and 20 epochs for ImageNet64. For better convergence, we adopt exponential moving average with decay rate $0.9995$ on model weights to get ensembled model for evaluation. 

To test the generation performance, we evaluate the above datasets on ImageNet32 trained model, in which ImageNet64 is split into four $32 \times 32$ patches. We also evaluate compression on natural high-resolution image datasets including CLIC.mobile, CLIC.pro\footnote{\url{https://www.compression.cc/challenge/}} and DIV2k~\cite{agustsson2017ntire}. These images are cropped to $32 \times 32$ patches and feed into the ImageNet32 trained model for evaluation. We also feed $64 \times 64$ patches to ImageNet64 trained model for comparison.

The evaluation protocol is the average bits per dimension (bpd), which can be derived with the negative log-likelihood divided by the dimension of the inputs. For no compression, the bpd is 8. We also investigate the coding efficiency of the proposed algorithm.

\subsection{Compression Performance}
\label{sec:ivpf_performance}

In this subsection, we investigate the compression performance of the proposed iVPF model on different datasets and different settings. 
We first evaluate the compression performance in terms of bits per dimension (bpd) under different quantization precision $k$. 
The analytical log-likelihood of continuous volume preserving flow and iVPF model are also reported for comparison. Fig. \ref{fig:compression} shows the results. 
It can be seen that the compression codelength is very close to the analytical log-likelihood of the original volume preserving flow model. To be specific, in CIFAR10, when $k=14$, the difference of bpd between the iVPF model and the continuous volume preserving flow is $0.005$, which is the bpd of the auxiliary data introdued in Alg. \ref{alg:coupling} ($16 / 3072 \approx 0.005$). While the final entropy coding only introduces another $0.003$ bpd compared with the iVPF model. Similar conclusions can be arrived on ImageNet32 and ImageNet64 dataset. We therefore set $k=14$ in the rest of the section.

\begin{table}[t]
\centering
\small
\caption{Coding performance of LBB and iVPF on CIFAR10 in terms of bits per dimension and coding efficiency. $k=14$ is used in iVPF. Infer time denotes the inference time with batch size 256.}
\label{tab:small_lbb}
\begin{tabular}{lccccc}
\toprule
& & & \multicolumn{2}{c}{time (ms)} & \\
         & bpd & aux. bits & infer & coding & \# coding \\
\midrule
LBB~\cite{ho2019compression} & \textbf{3.12} & 39.86 & 13.8 & 97 & 188 \\
\textbf{iVPF (Ours)} & 3.20 & \textbf{6.00} & \textbf{6.1} & \textbf{6.6} & \textbf{2} \\
\bottomrule
\end{tabular}
\end{table}

As compression with iVPF only involves a single decoding and encoding process, model inference contributes the main latency of coding, which is expected to be effective. Table \ref{tab:small_lbb} conveys that compression with iVPF can be performed in milliseconds.
More discussions on the coding time can be found in the supplementary material.
On the other hand, as the coding scheme has to sequentially encode each dimension of data, the latency of coding is negligible when the quantity of coding is large.
LBB~\cite{ho2019compression} is such an algorithm with large quantity of coding operations, and our re-implementation on LBB implies that coding takes about $90\%$ of the compression latency. 
Table \ref{tab:small_lbb} shows the coding performance between LBB and iVPF on CIFAR10 in terms of bpd, auxiliary bits used per dimension and the number of times for coding. Although LBB achieves slightly lower bpd, it suffers from large quantity of coding. Moreover, our algorithm could significantly reduce the auxiliary bits used\footnote{The quantization precision of the original byte data is $h=8$. We use $6$ auxiliary bits for dequantization to make the precision $k=14$.}.
As a result, compared to LBB, our algorithm is competitive with notable space/time efficiency and limited compression ratio loss. 



\begin{table}[t]
\centering
\small
\caption{Compression performance in bits per dimension (bpd) on benchmarking datasets. $^\dag$ denote the generation performance in which the models are trained on ImageNet32 and tested on other datasets.}
\label{tab:small}
\begin{tabular}{lcccc}
\toprule
         & ImageNet32 & ImageNet64 & CIFAR10 & \\
\midrule
PNG \cite{boutell1997png}     & 6.39        & 5.71        & 5.87 \\
FLIF \cite{sneyers2016flif}    & 4.52        & 4.19        & 4.19 \\
\midrule
L3C \cite{mentzer2019practical}      & 4.76        & -           & - \\
Bit-Swap \cite{kingma2019bit} & 4.50        & -           & 3.82 \\
HiLLoC \cite{townsend2019hilloc}$^\dag$  & 4.20        & 3.90        & 3.56 \\
\midrule
LBB~\cite{ho2019compression}      & 3.88        & 3.70        & 3.12 \\
\midrule
IDF \cite{hoogeboom2019integer}     & 4.18        & 3.90        & 3.34 \\
IDF \cite{hoogeboom2019integer}$^\dag$     & 4.18        & 3.94        & 3.60 \\
IDF++ \cite{berg2020idf++}   & 4.12        & 3.81        & 3.26 \\
\textbf{iVPF (Ours)}     & \underline{4.03}        & \underline{3.75}     & \underline{3.20} \\
\textbf{iVPF (Ours)}$^\dag$     & \underline{4.03}        & \underline{3.79}     & \underline{3.49}  \\
\bottomrule
\end{tabular}
\end{table}

\begin{table}[t]
\centering
\small
\caption{Compression performance in bpd of different algorithms on real-world high resolution images. $^\ddag$ denotes compression with ImageNet64 trained model.}
\label{tab:big}
\begin{tabular}{lccc}
\toprule
         & CLIC.mobile & CLIC.pro & DIV2k \\
\midrule
PNG \cite{boutell1997png}     & 3.90 & 4.00 & 3.09 \\
FLIF \cite{sneyers2016flif}    & 2.49 & 2.78 & 2.91 \\
\midrule
L3C \cite{mentzer2019practical}     & 2.64 & 2.94 & 3.09 \\
RC \cite{mentzer2020learning}      & 2.54 & 2.93 & 3.08 \\
\midrule
\textbf{iVPF (Ours)}     & \textbf{2.47} & \textbf{2.63} & \textbf{2.77} \\
\textbf{iVPF (Ours)}$^\ddag$     & \textbf{2.39} & \textbf{2.54} & \textbf{2.68} \\
\bottomrule
\end{tabular}
\end{table}

\subsection{Comparison with State-of-the-Art}
To show the competitiveness of our proposed iVPF model, we compare its performance with state-of-the-art learning based lossless compression methods and representative conventional lossless coders, i.e., PNG \cite{boutell1997png} and FLIF \cite{sneyers2016flif}, on small and large scale datasets.

\textbf{Experiments on Low Resolution Images.} We compare iVPF with L3C \cite{mentzer2019practical}, Bit-Swap \cite{kingma2019bit} and HiLLoC \cite{townsend2019hilloc}, which represent cutting-edge performance of non-flow based latent variable models.
Moreover, we compare the results with a recent advance on lossless compression with continuous flows including LBB \cite{ho2019compression}, IDF~\cite{hoogeboom2019integer} and IDF++~\cite{berg2020idf++}.
The results are shown with bpd to encode one image averaged on the datasets, where lower bpd shows the advantage of the method.
From Table \ref{tab:small}, one can see that our proposed method outperforms its predecessors, i.e., IDF and IDF++, which validates the improvement of our method in the line of integer flows.
Although LBB preforms slightly better than iVPF, our method is competitive as the former suffers from large number of auxiliary bits and low coding efficiency, as discussed in Sec. \ref{sec:ivpf_performance}.

\textbf{Generation.} We test the generation performance of iVPF, with HiLLoc~\cite{townsend2019hilloc}, IDF~\cite{hoogeboom2019integer} and LBB~\cite{ho2019compression} for compression. The results is shown in Table \ref{tab:small}. It can be seen that most compression algorithms achieves desirable generation performance, and ours achieves the best performance under all generation experiments.

\textbf{Experiments on High Resolution Images.} We compare our method with representative practical image lossless compression algorithms, i.e., L3C \cite{mentzer2019practical} and RC \cite{mentzer2020learning}. We use patch-based evaluation in which images are cropped to $32 \times 32$ for ImageNet32 model and $64 \times 64$ for ImageNet64 model.
From Table \ref{tab:big}, one can observe that for high resolution images, conventional coders, e.g., FLIF, still have advantage over learning based methods, while they tend to underperform learning based methods on small scale images.
Nevertheless, our proposed iVPF outperform all the benchmarks shown for high resolution images and thus validates its competitiveness.

\section{Conclusions}

In this paper, we present iVPF, the Numerical Invertible Volume Preserving Flow Model for efficient lossless compression. 
We investigate the potential of volume preserving flow for efficient lossless compression. 
Then we introduce iVPF with novel numeral computing algorithms to the flow model to achieve exact bijective mapping between data and latent variables, so that efficient lossless compression is available by directly coding the latents. The novel lossless compression algorithm based on iVPF is proposed by combining the bits-back coding scheme. Experiments show that the compression codelength reaches the theoretical lower bound of negative log-likelihood. 
The proposed model achieves high compression ratio with desired coding efficiency on image datasets with both low and high resolution images.

{\small
\bibliographystyle{ieee_fullname}
\bibliography{flow.bib}

\begin{thebibliography}{10}\itemsep=-1pt

\bibitem{agustsson2017ntire}
Eirikur Agustsson and Radu Timofte.
\newblock Ntire 2017 challenge on single image super-resolution: Dataset and
  study.
\newblock In {\em Proceedings of the IEEE Conference on Computer Vision and
  Pattern Recognition Workshops}, pages 126--135, 2017.

\bibitem{behrmann2020understanding}
Jens Behrmann, Paul Vicol, Kuan-Chieh Wang, Roger Grosse, and J{\"o}rn-Henrik
  Jacobsen.
\newblock Understanding and mitigating exploding inverses in invertible neural
  networks.
\newblock {\em arXiv preprint arXiv:2006.09347}, 2020.

\bibitem{berg2020idf++}
Rianne van~den Berg, Alexey~A Gritsenko, Mostafa Dehghani, Casper~Kaae
  S{\o}nderby, and Tim Salimans.
\newblock Idf++: Analyzing and improving integer discrete flows for lossless
  compression.
\newblock {\em arXiv preprint arXiv:2006.12459}, 2020.

\bibitem{boutell1997png}
Thomas Boutell and T Lane.
\newblock Png (portable network graphics) specification version 1.0.
\newblock {\em Network Working Group}, pages 1--102, 1997.

\bibitem{cao2020lossless}
Sheng Cao, Chao-Yuan Wu, and Philipp Kr{\"a}henb{\"u}hl.
\newblock Lossless image compression through super-resolution.
\newblock {\em arXiv preprint arXiv:2004.02872}, 2020.

\bibitem{collet2016smaller}
Yann Collet and Chip Turner.
\newblock Smaller and faster data compression with zstandard.
\newblock {\em Facebook Code [online]}, 2016.

\bibitem{dinh2014nice}
Laurent Dinh, David Krueger, and Yoshua Bengio.
\newblock Nice: Non-linear independent components estimation.
\newblock {\em arXiv preprint arXiv:1410.8516}, 2014.

\bibitem{dinh2016density}
Laurent Dinh, Jascha Sohl-Dickstein, and Samy Bengio.
\newblock Density estimation using real nvp.
\newblock {\em arXiv preprint arXiv:1605.08803}, 2016.

\bibitem{duda2013asymmetric}
Jarek Duda.
\newblock Asymmetric numeral systems: entropy coding combining speed of huffman
  coding with compression rate of arithmetic coding.
\newblock {\em arXiv preprint arXiv:1311.2540}, 2013.

\bibitem{gage1994new}
Philip Gage.
\newblock A new algorithm for data compression.
\newblock {\em C Users Journal}, 12(2):23--38, 1994.

\bibitem{higgins2016beta}
Irina Higgins, Loic Matthey, Arka Pal, Christopher Burgess, Xavier Glorot,
  Matthew Botvinick, Shakir Mohamed, and Alexander Lerchner.
\newblock beta-vae: Learning basic visual concepts with a constrained
  variational framework.
\newblock 2016.

\bibitem{hinton93keeping}
GE Hinton and Drew van Camp.
\newblock Keeping neural networks simple by minimising the description length
  of weights. 1993.
\newblock In {\em Proceedings of COLT-93}, pages 5--13.

\bibitem{ho2019compression}
Jonathan Ho, Evan Lohn, and Pieter Abbeel.
\newblock Compression with flows via local bits-back coding.
\newblock In {\em Advances in Neural Information Processing Systems}, pages
  3879--3888, 2019.

\bibitem{hoogeboom2020learning}
Emiel Hoogeboom, Taco~S Cohen, and Jakub~M Tomczak.
\newblock Learning discrete distributions by dequantization.
\newblock {\em arXiv preprint arXiv:2001.11235}, 2020.

\bibitem{hoogeboom2019integer}
Emiel Hoogeboom, Jorn Peters, Rianne van~den Berg, and Max Welling.
\newblock Integer discrete flows and lossless compression.
\newblock In {\em Advances in Neural Information Processing Systems}, pages
  12134--12144, 2019.

\bibitem{howard2019searching}
Andrew Howard, Mark Sandler, Grace Chu, Liang-Chieh Chen, Bo Chen, Mingxing
  Tan, Weijun Wang, Yukun Zhu, Ruoming Pang, Vijay Vasudevan, et~al.
\newblock Searching for mobilenetv3.
\newblock In {\em Proceedings of the IEEE International Conference on Computer
  Vision}, pages 1314--1324, 2019.

\bibitem{huang2017densely}
Gao Huang, Zhuang Liu, Laurens Van Der~Maaten, and Kilian~Q Weinberger.
\newblock Densely connected convolutional networks.
\newblock In {\em Proceedings of the IEEE conference on computer vision and
  pattern recognition}, pages 4700--4708, 2017.

\bibitem{huffman1952method}
David~A Huffman.
\newblock A method for the construction of minimum-redundancy codes.
\newblock {\em Proceedings of the IRE}, 40(9):1098--1101, 1952.

\bibitem{kalchbrenner2017video}
Nal Kalchbrenner, A{\"a}ron Oord, Karen Simonyan, Ivo Danihelka, Oriol Vinyals,
  Alex Graves, and Koray Kavukcuoglu.
\newblock Video pixel networks.
\newblock In {\em International Conference on Machine Learning}, pages
  1771--1779. PMLR, 2017.

\bibitem{kingma2014adam}
Diederik~P Kingma and Jimmy Ba.
\newblock Adam: A method for stochastic optimization.
\newblock {\em arXiv preprint arXiv:1412.6980}, 2014.

\bibitem{kingma2018glow}
Durk~P Kingma and Prafulla Dhariwal.
\newblock Glow: Generative flow with invertible 1x1 convolutions.
\newblock In {\em Advances in neural information processing systems}, pages
  10215--10224, 2018.

\bibitem{kingma2013auto}
Diederik~P Kingma and Max Welling.
\newblock Auto-encoding variational bayes.
\newblock {\em arXiv preprint arXiv:1312.6114}, 2013.

\bibitem{kingma2019bit}
Friso~H Kingma, Pieter Abbeel, and Jonathan Ho.
\newblock Bit-swap: Recursive bits-back coding for lossless compression with
  hierarchical latent variables.
\newblock {\em arXiv preprint arXiv:1905.06845}, 2019.

\bibitem{kumar2019videoflow}
Manoj Kumar, Mohammad Babaeizadeh, Dumitru Erhan, Chelsea Finn, Sergey Levine,
  Laurent Dinh, and Durk Kingma.
\newblock Videoflow: A flow-based generative model for video.
\newblock {\em arXiv preprint arXiv:1903.01434}, 2(5), 2019.

\bibitem{mackay2003information}
David~JC MacKay and David~JC Mac~Kay.
\newblock {\em Information theory, inference and learning algorithms}.
\newblock Cambridge university press, 2003.

\bibitem{mentzer2019practical}
Fabian Mentzer, Eirikur Agustsson, Michael Tschannen, Radu Timofte, and Luc~Van
  Gool.
\newblock Practical full resolution learned lossless image compression.
\newblock In {\em Proceedings of the IEEE Conference on Computer Vision and
  Pattern Recognition}, pages 10629--10638, 2019.

\bibitem{mentzer2020learning}
Fabian Mentzer, Luc~Van Gool, and Michael Tschannen.
\newblock Learning better lossless compression using lossy compression.
\newblock In {\em Proceedings of the IEEE/CVF Conference on Computer Vision and
  Pattern Recognition}, pages 6638--6647, 2020.

\bibitem{oord2016wavenet}
Aaron van~den Oord, Sander Dieleman, Heiga Zen, Karen Simonyan, Oriol Vinyals,
  Alex Graves, Nal Kalchbrenner, Andrew Senior, and Koray Kavukcuoglu.
\newblock Wavenet: A generative model for raw audio.
\newblock {\em arXiv preprint arXiv:1609.03499}, 2016.

\bibitem{prenger2019waveglow}
Ryan Prenger, Rafael Valle, and Bryan Catanzaro.
\newblock Waveglow: A flow-based generative network for speech synthesis.
\newblock In {\em ICASSP 2019-2019 IEEE International Conference on Acoustics,
  Speech and Signal Processing (ICASSP)}, pages 3617--3621. IEEE, 2019.

\bibitem{rabbani2002jpeg2000}
Majid Rabbani.
\newblock Jpeg2000: Image compression fundamentals, standards and practice.
\newblock {\em Journal of Electronic Imaging}, 11(2):286, 2002.

\bibitem{roelofs1999png}
Greg Roelofs and Richard Koman.
\newblock {\em PNG: the definitive guide}.
\newblock O'Reilly \& Associates, Inc., 1999.

\bibitem{salimans2017pixelcnn++}
Tim Salimans, Andrej Karpathy, Xi Chen, and Diederik~P Kingma.
\newblock Pixelcnn++: Improving the pixelcnn with discretized logistic mixture
  likelihood and other modifications.
\newblock {\em arXiv preprint arXiv:1701.05517}, 2017.

\bibitem{sneyers2016flif}
Jon Sneyers and Pieter Wuille.
\newblock Flif: Free lossless image format based on maniac compression.
\newblock In {\em 2016 IEEE International Conference on Image Processing
  (ICIP)}, pages 66--70. IEEE, 2016.

\bibitem{sorrenson2020disentanglement}
Peter Sorrenson, Carsten Rother, and Ullrich K{\"o}the.
\newblock Disentanglement by nonlinear ica with general incompressible-flow
  networks (gin).
\newblock {\em arXiv preprint arXiv:2001.04872}, 2020.

\bibitem{townsend2019practical}
James Townsend, Tom Bird, and David Barber.
\newblock Practical lossless compression with latent variables using bits back
  coding.
\newblock {\em arXiv preprint arXiv:1901.04866}, 2019.

\bibitem{townsend2019hilloc}
James Townsend, Thomas Bird, Julius Kunze, and David Barber.
\newblock Hilloc: Lossless image compression with hierarchical latent variable
  models.
\newblock {\em arXiv preprint arXiv:1912.09953}, 2019.

\bibitem{van2016conditional}
Aaron Van~den Oord, Nal Kalchbrenner, Lasse Espeholt, Oriol Vinyals, Alex
  Graves, et~al.
\newblock Conditional image generation with pixelcnn decoders.
\newblock In {\em Advances in neural information processing systems}, pages
  4790--4798, 2016.

\bibitem{wallace1990classification}
Chris~S Wallace.
\newblock Classification by minimum-message-length inference.
\newblock In {\em International Conference on Computing and Information}, pages
  72--81. Springer, 1990.

\bibitem{witten1987arithmetic}
Ian~H Witten, Radford~M Neal, and John~G Cleary.
\newblock Arithmetic coding for data compression.
\newblock {\em Communications of the ACM}, 30(6):520--540, 1987.

\bibitem{wu2018group}
Yuxin Wu and Kaiming He.
\newblock Group normalization.
\newblock In {\em Proceedings of the European conference on computer vision
  (ECCV)}, pages 3--19, 2018.

\end{thebibliography}
}

\appendix

\clearpage

\section{Detailed Implementation of iVPF}
\label{app:detail_ivpf}

\subsection{Normalising Flow and Volume Preserving Flows}

Unless specified, in this subsection, we discuss the continuous flow, which is generally used for training. Note that iVPF is constructed based on the trained model with minor modifications.


\subsubsection{Normalising Flows}

As discussed in Sec. \ref{sec:vpf}, the general normalising flow establishes a continuous bijection $f: \mathcal{X} \to \mathcal{Z}$ between input data $\mathcal{X}$ and latent space $\mathcal{Z}$. 
The flow model is constructed with composition of flow layers such that $f = f_L \circ ... \circ f_2 \circ f_1$. 
Let $\mathbf{y}_{l} = f_l (\mathbf{y}_{l-1})$, $\mathbf{y}_0 = \mathbf{x}$ and $\mathbf{z} = \mathbf{y}_L$.
The prior distribution $p_Z(\mathbf{z})$ can be stipulated from any parametric distribution families, e.g., Gaussians, logistic distribution, etc.
Then the model distribution $p_X (\mathbf{x})$ of input data $\mathbf{x}$ can be expressed with the latent variable $\mathbf{z}$ such that
\begin{equation}
\log p_X(\mathbf{x}) = \log p_Z(\mathbf{z}) + \sum_{i=1}^L \log |\frac{\partial \mathbf{y}_l}{\partial \mathbf{y}_{l-1}}|,
\end{equation}
where $|\frac{\partial \mathbf{y}_l}{\partial \mathbf{y}_{l-1}}|$ is the absolute value of the determinant of the Jacobian matrix. 

The inverse flow is $f^{-1} = f_1^{-1} \circ ... \circ f_L^{-1}$. Given latents $\mathbf{y}_L = \mathbf{z}$, the original data $\mathbf{x}$ can be recovered with $\mathbf{y}_{l-1} = f^{-1}_l (\mathbf{y}_l)$ and $\mathbf{x} = \mathbf{y}_0$.

\subsubsection{Constructing Volume Preserving Flows}

Volume preserving flows have the property $|\frac{\partial \mathbf{z}}{\partial \mathbf{x}}| = 1$.
Each volume preserving flow is a composition of volume preserving layers where $|\frac{\partial \mathbf{y}_{l}}{\partial \mathbf{y}_{l-1}}| = 1$ for any $\mathbf{y}_{l} = f_l (\mathbf{y}_{l-1}), l=1,2,...,L$. $p_X (\mathbf{x})$ can be directly computed with prior distribution such that $p_X (\mathbf{x}) = p_Z(\mathbf{z}), \mathbf{z} = f(\mathbf{x})$. 
The prior distribution adopted in this work is more complex than that used in general flows, and the distribution parameters are trainable variables. In this paper, the prior distribution is the mix-Gaussian distribution
\begin{equation}
    p_Z(\mathbf{z}) = \prod_{i=1}^d \big[ \sum_{k=1}^K \pi_{ik} \cdot \mathcal{N} (z_i | \mu_{ik}, \sigma^2_{ik}) \big],
    \label{eq:prior}
\end{equation}
where $\mathbf{z} = [z_1, ..., z_d]^\top$ and $\sum_{k=1}^K \pi_i = 1$. $\pi_{ik}, \mu_{ik}, \sigma^2_{ik}$ are computed with learnable parameters $\bm{\alpha}, \bm{\mu}, \bm{\gamma} \in \mathbb{R}^{d \times K}$. In particular, $\pi_{ik} = \mathrm{softmax} (\bm{\alpha}_i), \mu_{ik} = \bm{\mu}_{ik}, \sigma^2_{ik} = \exp (\bm{\gamma}_{ik})$.

For factor-out layers (shown in Sec. \ref{sec:vpf}), $p(\mathbf{z}_1 | \mathbf{y}_1)$ is modelled with gaussian distribution
\begin{equation}
    p(\mathbf{z}_1 | \mathbf{y}_1) = \mathcal{N}(\mathbf{z}_1 | \bm{\mu}(\mathbf{y}_1), \exp(\bm{\gamma}(\mathbf{y}_1))),
\end{equation}
where $\bm{\mu}(\cdot), \bm{\gamma}(\cdot)$ are all modelled with neural networks. The input dimension is the same as that of $\mathbf{y}_1$ and the output dimension is the same as that of $\mathbf{z}_1$.

From Sec. \ref{sec:vpf}, it can be seen that the volume preserving layers can be constructed from most general flow layers with limited constraints, thus the expressive power is close to the general flow. 
Moreover, the complex prior with learnable parameters improves the expressive power of iVPF. In fact, compared to general flows, iVPF reaches higher bpd than most non-volume preserving ones like RealNVP~\cite{dinh2016density} and Glow~\cite{kingma2018glow}.

\subsubsection{Training Volume Preserving Flows}

As $p_X(\mathbf{x})$ can be directly computed, the training objective is the maximum log-likelihood. Note that we use the discrete data for training the continuous flow. In particular, the input data (images, texts, binary data, etc.) are a branch of integers such that $\mathbf{x} \in \{0, 1, ..., 2^h-1 \}^d$. To resolve this issue, we use variational dequantization technique to make the input data continuous~\cite{ho2019compression} in which some noise is added to input data. Given the discrete data $\mathbf{x}$, the training objective is
\begin{equation}
    \mathcal{L} = -\log p_X(\mathbf{x} + \mathbf{u}^\circ) + \log q (\mathbf{u}^\circ | \mathbf{x}) \quad \mathbf{u}^\circ \sim q(\mathbf{u} | \mathbf{x})
    \label{eq:vdeq}
\end{equation}
where $\mathbf{u}^\circ \in [0, 1)^d$. $q(\mathbf{u} | \mathbf{x})$ is certain distribution within $[0, 1)^d$, which can be either learned distribution with parameters (e.g. flows), or pre-defined distributions. 

In this paper, we simply use uniform distribution such that $q(\mathbf{u} | \mathbf{x}) = U(0,1)$\footnote{Note that the uniform distribution is slightly different than used in the main paper (see Sec. \ref{sec:flow_arch}), as the input data is not normalised here. In fact, they are equivalent after normalisation.}, in which the noise is dependent with the input such that $q(\mathbf{u} | \mathbf{x}) = q(\mathbf{u})$. This implementation is simple for flow training. In fact, we may use more complex $q(\mathbf{u} | \mathbf{x})$ for better model. Moreover, for stable training, we normalise the dequantized data $\mathbf{x} + \mathbf{u}^\circ$ to $[-0.5, 0.5]$ by linear transformation before feeding the flow.

It is clear that the expected value of Eq. (\ref{eq:vdeq}) is $\mathbb{E}_{q(\mathbf{u} | \mathbf{x})} [ -\log p_X (\mathbf{x} + \mathbf{u}) + \log q(\mathbf{u} | \mathbf{x})]$. It has close relationship between Eq. (\ref{eq:codelength}) in the paper.

\subsection{rANS Coder}

rANS is an efficient entropy coding method. It is the range-based variant of Asymmetric Numeral System (ANS)~\cite{duda2013asymmetric}. 
Given symbol $s$ and the probability mass function $p(s)$, $s$ can be encoded with codelength $-\log_2 p(s)$. 

rANS is very simple to implement as the encoded bits is represented by a single number. For existing code $c$, $s$ can be encoded to $c'$ as
\begin{equation}
    c'(c, s) = \lfloor c / l_s \rfloor \cdot m + (c \mod l_s) + b_s
    \label{eq:rans_encoder}
\end{equation}
In the above equation, $m$ is a large integer and usually be chosen as a power of two. Denote by $\mathrm{cdf}(s)$ as the cumulative density function such that $\mathrm{cdf}(s) = \sum_{i=1}^s p(s)$, we have $b_s = \lfloor \mathrm{cdf}(s-1) \cdot m \rfloor$ and $l_s = b_{s+1} - b_s$.

Given $p(s)$ and code $c'$, the symbol can be recovered. First, we compute $b = c' \mod m$, then $s$ can be recovered such that $b_s \le b < b_{s+1}$. This can be implemented with binary search. Then the code can be recovered with
\begin{equation}
    c(c', s) = \lfloor c' / m \rfloor \cdot l_s + (c' \mod m) - b_s
    \label{eq:rans_decoder}
\end{equation}

We can easily perform bits-back coding with rANS coder. The auxiliary bits can be initialised with certain integer. Then the bits-back decoding can be performed with Eq. (\ref{eq:rans_decoder}) and the code is recovered with Eq. (\ref{eq:rans_encoder}). In this paper, we use rANS for coding with iVPF.

\subsection{From Volume Preserving Flow to iVPF}

With simple modifications, a volume preserving flow can be transformed to an iVPF. 
The model parameter of the iVPF is directly the parameter in the trained continuous volume preserving flow.
Furthermore, the following modifications are performed:

\textbf{(1) Outputs of each layer.} Please refer to Sec. \ref{sec:quantization} and \ref{sec:ivpf}. The inputs and outputs of each flow layer should be $k$-precision quantized. While in continuous flow, quantization is not needed.

\textbf{(2) Coupling layer.} Please refer to Sec. \ref{sec:coupling}. The results should be computed with Alg. \ref{alg:coupling}. While in continuous flow, the outputs can be directly computed with Eq. (\ref{eq:coupling}).

\textbf{(3) $1\times1$ convolution layer.} Please refer to Sec. \ref{sec:conv}. While in continuous flow, the outputs can be directly computed with matrix multiplication of $\mathbf{W}$.


\section{Proofs of Propositions}
\label{app:proof_inv}

In this section, we show the proof of the two propositions in Section 3.2.2.
We represent the two propositions in this section and use Fig. \ref{fig:volume} to illustrate Proposition \ref{cl:inv}.
\begin{figure}[t]
\begin{center}
\includegraphics[width=0.45\textwidth]{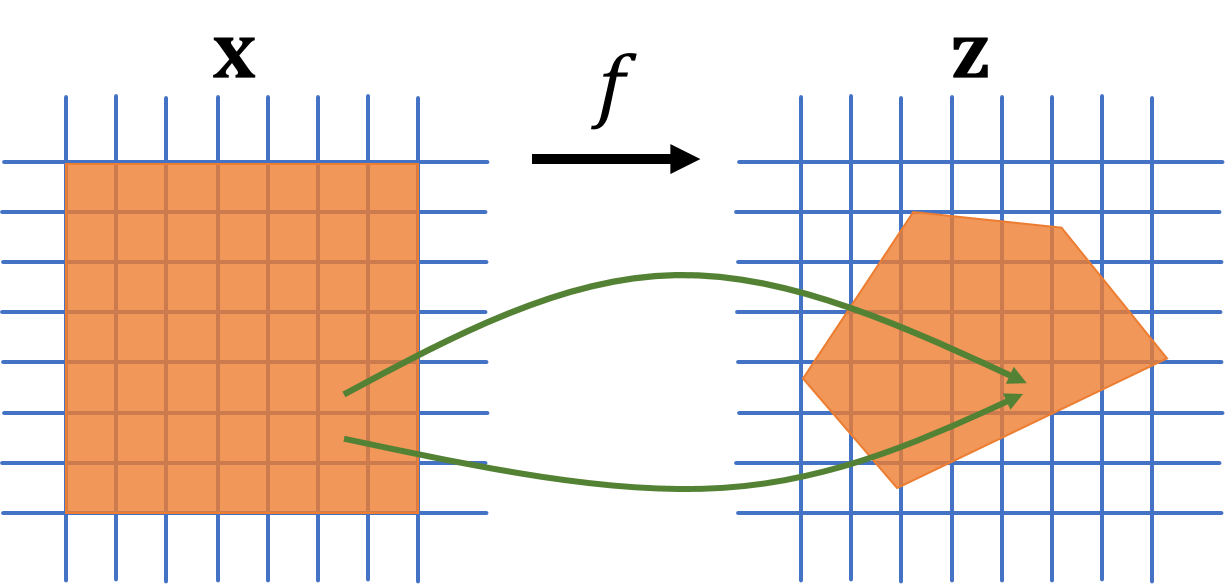}\\
\end{center}
\caption{Illustration of how non-volume preserving flow could lead to no-bijection after discretization (Proposition \ref{cl:inv}). The orange regions denote the domain $\mathcal{U}_0$ (in $\mathbf{x})$ and the corresponding co-domain $\mathcal{V}_0$ (in $\mathbf{z}$).}
\label{fig:volume}
\end{figure}


\textbf{(1) Proof of Proposition \ref{cl:long}.}

\begin{proof}
By change of variable, we have
\begin{equation*}
    p_Z(f(\mathbf{x}_0))|\frac{\partial \mathbf{z}}{\partial \mathbf{x}}(\mathbf{x}_0)| = p_X(\mathbf{x}_0).
\end{equation*}
Since $|\frac{\partial \mathbf{z}}{\partial \mathbf{x}}(\mathbf{x}_0)| > 1$, we have $p_Z(f(\mathbf{x}_0)) < p_X(\mathbf{x}_0)$.
Let $\epsilon=p_X(\mathbf{x}_0)-p_Z(f(\mathbf{x}_0))$. 
Due to the continuity of $p_Z$, there exist $\delta>0$, s.t. $p_Z(\mathbf{z}^\prime)\in (p_Z(\mathbf{z})-\epsilon, p_Z(\mathbf{z})+\epsilon)$, when $||\mathbf{z}-\mathbf{z}^\prime||<\delta$.
$K$ can be chosen st. $2^{-K-1} < \delta$ and we have
\begin{equation*}
    ||f(\mathbf{x}_0) - \round{f(\mathbf{x}_0)}|| \le 2^{-K-1} < \delta,
\end{equation*}
which means $p_Z(\round{f(\mathbf{x}_0)}) < p_X(\mathbf{x}_0)$.

\end{proof}


\textbf{(2) Proof of Proposition \ref{cl:inv}.}

\begin{proof}
Assume $|\frac{\partial \mathbf{z}}{\partial \mathbf{x}}(\mathbf{x}_0)| < 1$, due to the continuity of the flow gradient $f^\prime$, there must be a neighbourhood of $\mathbf{x}_0$ denoted by $\mathcal{U}_{0}$, such that $|\frac{\partial \mathbf{z}}{\partial \mathbf{x}}|<1$ for any $\mathbf{x} \in \mathcal{U}_0$.
Let $\mathcal{V}_0 = \{\mathbf{z}=f(\mathbf{x}): \mathbf{x} \in \mathcal{U}_0 \}$,
we have $|\mathcal{V}_0|<|\mathcal{U}_0|$ where $|\cdot|$ denotes the volume of a space.

Consider a discretisation scheme dividing the space into bins with volume $\delta$ much smaller than $|\mathcal{U}_0|$.
Since the volume of $\mathcal{V}_0$ is strictly smaller than the volume of $\mathcal{U}_0$, the number of bins in $\mathcal{V}_0$ is strictly smaller than that in $\mathcal{U}_0$.
Thus a bijection could not be induced in such scenario for non-volume preserving flows. 
To be specific, there exists multiple $\mathbf{x}$s at different bins that are mapped to the same $\mathbf{z}$, making the lossless compression intractable.
\end{proof}

\section{Correctness of MAT (Alg. \ref{alg:coupling})}
\label{app:proof_mat}

Unless specified, we use the notation in Sec. \ref{sec:coupling}. 

For the encoding process, the input is $(\mathbf{x}_b, r_0)$ where $\round{\mathbf{x}_b} = \mathbf{x}_b, r_0 \in [0, 2^C)$, the parameters are integers $m_0, m_1, ..., m_{d_b}$ where $m_0 = m_{d_b} = 2^C$, and the expected output is $(\mathbf{z}_b, r_{d_b})$. Then $(\mathbf{z}_b, r_{d_b})$ is computed as follows:

\begin{enumerate}
    \item $\bar{\mathbf{x}}_b = 2^k \cdot \mathbf{x}_b$.
    \item Given $\bar{\mathbf{x}}_b = [x_1, ..., x_{d_b}]^\top$ and $r_0$, compute $\mathbf{y}_b = [y_1, ..., y_{d_b}]^\top$ and $v_1, ..., v_{d_b}$ for $i=1,...,d_b$ such that
    \begin{equation*}
    \begin{split}
        &v_i = x_i \cdot m_{i-1} + r_{i-1}; \\
        &y_i = \lfloor v_i / m_i \rfloor, \quad r_i = v_i \mod m_i.
    \end{split}
    \end{equation*}
    Then get $(\mathbf{y}_b, r_{d_b})$.
    \item $\mathbf{z}_b = \mathbf{y}_b / 2^k + \round{\mathbf{t}}$. Then get $(\mathbf{z}_b, r_{d_b})$.
\end{enumerate}

For decoding process, the input is $(\mathbf{z}_b, r_{d_b})$ where $\round{\mathbf{z}_b} = \mathbf{z}_b, r_{d_b} \in [0, 2^C)$, the parameters $m_0, m_1, ..., m_{d_b}$ are the same as that in encoding process with $m_0 = m_{d_b} = 2^C$, and the expected output is $(\mathbf{x}_b, r_{0})$. Then $(\mathbf{x}_b, r_{0})$ is computed as follows:

\begin{enumerate}
    \item $\mathbf{y}_b = 2^k \cdot (\mathbf{z}_b - \round{\mathbf{t}})$.
    \item Compute $\bar{\mathbf{x}}_b = [x_1, ..., x_{d_b}]^\top$ and $v_1 ..., v_{d_b}$ for $i=d_b,...,1$ such that
    \begin{equation*}
    \begin{split}
        &v_i = y_i \cdot m_{i} + r_{i}; \\
        &x_i = \lfloor v_i / m_{i-1} \rfloor, \quad r_{i-1} = v_i \mod m_{i-1}.
    \end{split}
    \end{equation*}
    Then get $(\bar{\mathbf{x}}_b, r_{0})$.
    \item $\mathbf{x}_b = \bar{\mathbf{x}}_b / 2^k$. Then get $(\mathbf{x}_b, r_{0})$.
\end{enumerate}

Then the correctness of MAT in Alg. \ref{alg:coupling} is ensured by the following theorem:
\begin{theorem}
Given the above encoding and decoding process, if $r_0 \in [0, 2^C)$, then

\textbf{P1}: $\round{\mathbf{z}_b} = \mathbf{z}_b, r_{d_b} \in [0, 2^C)$;

\textbf{P2}: $(\mathbf{x}_b, r_{0})$ and $(\mathbf{z}_b, r_{d_b})$ establish a bijection.
\end{theorem}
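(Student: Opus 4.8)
The plan is to prove P1 and P2 together by tracking the auxiliary variable $r$ through the loops and showing that each iteration of the forward loop is an invertible ``digit-swap'' operation. The key observation is the following invariant: after step $i$ of the forward loop we have $r_i \in [0, m_i)$, and the pair $(x_i, r_{i-1}) \in \{0,1,\dots\} \times [0,m_{i-1})$ is in bijection with the pair $(y_i, r_i) \in \{0,1,\dots\} \times [0, m_i)$ via the single relation $v_i = x_i m_{i-1} + r_{i-1} = y_i m_i + r_i$. This is just the uniqueness of quotient and remainder: given $v_i$, the quotient $\lfloor v_i/m_i\rfloor$ and remainder $v_i \bmod m_i$ are uniquely determined, and conversely $v_i$ is recovered from $(y_i, r_i)$ provided $r_i < m_i$; symmetrically for $(x_i, r_{i-1})$ provided $r_{i-1} < m_{i-1}$. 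Since $m_0, m_1, \dots, m_{d_b}$ are positive integers (the $\mathrm{round}$ operation is defined to output $1$ rather than $0$ on inputs in $(0,1]$, so no $m_i$ vanishes) and are computed identically in the encoder and decoder, these relations are well-posed.

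First I would establish P1. The hypothesis gives $r_0 \in [0, m_0)$ since $m_0 = 2^C$. Then induction on $i$: assuming $r_{i-1} \in [0, m_{i-1})$, the definition $r_i = v_i \bmod m_i$ immediately gives $r_i \in [0, m_i)$; after $d_b$ steps we get $r_{d_b} \in [0, m_{d_b}) = [0, 2^C)$, which is the second half of P1. For the first half, note $\bar{\mathbf{x}}_b = 2^k \mathbf{x}_b$ consists of integers (by $\round{\mathbf{x}_b} = \mathbf{x}_b$), hence each $v_i$ is an integer, hence each $y_i = \lfloor v_i/m_i\rfloor$ is an integer; therefore $\mathbf{y}_b/2^k$ has $k$-precision, and adding $\round{\mathbf{t}}$ (also $k$-precision) keeps $k$-precision, so $\round{\mathbf{z}_b} = \mathbf{z}_b$.

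Next, for P2, I would show the decoding process is the exact inverse of the encoding process by composing the per-step bijections in reverse order. Step 3 of decode inverts step 1 of encode ($\mathbf{y}_b = 2^k(\mathbf{z}_b - \round{\mathbf{t}})$ recovers the integer vector $\mathbf{y}_b$ exactly, using $\round{\mathbf{z}_b}=\mathbf{z}_b$). For the main loop, I run $i$ from $d_b$ down to $1$: given $(y_i, r_i)$ with $r_i \in [0, m_i)$, form $v_i = y_i m_i + r_i$ — this recovers the same integer $v_i$ as in encoding because the quotient-remainder decomposition of $v_i$ with respect to $m_i$ is unique — then set $x_i = \lfloor v_i/m_{i-1}\rfloor$ and $r_{i-1} = v_i \bmod m_{i-1}$, which by the same uniqueness recovers exactly the $x_i$ and $r_{i-1}$ of the encoder. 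Feeding $r_{i-1}$ into the next (lower-index) iteration continues the recovery, and after the loop we reach $r_0$, matching the encoder's input. Finally step 3 of decode inverts step 1 of encode, returning $\mathbf{x}_b$. Composing, $\text{Decode} \circ \text{Encode} = \mathrm{id}$ on the domain $\{(\mathbf{x}_b, r_0) : \round{\mathbf{x}_b}=\mathbf{x}_b,\, r_0 \in [0,2^C)\}$, and a symmetric argument (or a dimension/cardinality-free injectivity-on-both-sides argument) gives $\text{Encode} \circ \text{Decode} = \mathrm{id}$ on the image, establishing the bijection.

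The main obstacle — really the only subtle point — is verifying that the $m_i$ produced by the decoder are bit-for-bit identical to those of the encoder, so that the quotient-remainder relations line up. This rests on two facts: (i) the inputs $\mathbf{s}$ (equivalently $\mathbf{x}_a$) are passed unchanged and the formula $m_i = \mathrm{round}(m_0/\prod_{j=1}^i s_j)$ is deterministic floating-point arithmetic evaluated identically on both sides; and (ii) the special convention that $\mathrm{round}$ returns $1$ on $(0,1]$ guarantees every $m_i \ge 1$, so division by $m_i$ is always legitimate and the bijections above are never degenerate. Once this is granted, everything else is the elementary algebra of division with remainder, and no appeal to the volume-preserving constraint $\prod_i s_i = 1$ is needed for correctness — that constraint only ensures $m_{d_b} = m_0 = 2^C$ so that the auxiliary register returns to a $C$-bit value, which is what makes the scheme useful rather than merely correct.
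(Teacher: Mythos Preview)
Your proposal is correct and follows essentially the same approach as the paper: both arguments establish P1 by tracking that each $r_i \in [0,m_i)$ and that all intermediate quantities are integers, and establish P2 by showing that each forward step $(x_i,r_{i-1}) \mapsto (y_i,r_i)$ is inverted by the corresponding backward step via the common value $v_i = x_i m_{i-1} + r_{i-1} = y_i m_i + r_i$ and uniqueness of quotient--remainder. Your framing of each iteration as a single ``digit-swap'' bijection is a slightly cleaner packaging of what the paper verifies by an explicit downward induction computing $(x_l^d,r_{l-1}^d)=(x_l,r_{l-1})$, but the content is the same; your added remarks about the $m_i$ being computed identically on both sides and about the role of the volume-preserving constraint are accurate side observations that the paper leaves implicit.
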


\begin{proof}
\textbf{P1}. Just consider the encoding process. In Step 1, $\mathbf{x}_b$ is $k$-precision floating points, $\bar{\mathbf{x}}_b$ are integers. In Step 2, $\mathbf{y}_b$ are integers as all elements are computed with integer space. As $r_{d_b} = v_{d_b} \mod m_{d_b}$ and $m_{d_b} = 2^C$, $r_{d_b} \in [0, 2^C)$. In Step 3, both $\mathbf{y}_b / 2^k$ and $\round{\mathbf{t}}$ are $k$-precision floating points, thus $\mathbf{z}_b$ is $k$-precision ones with $\round{\mathbf{z}_b} = \mathbf{z}_b$. Then the proof of \textbf{P1} is completed.

\textbf{P2}. Consider Step 3 in the encoding process and Step 1 in the decoding process, as $\round{\mathbf{z}_b} = \mathbf{z}_b$, $\mathbf{z}_b$ and $\mathbf{y}_b$ establish a bijection. Consider Step 1 in the encoding process and Step 3 in the decoding process, as $\round{\mathbf{x}_b} = \mathbf{x}_b$, $\mathbf{x}_b$ and $\bar{\mathbf{x}}_b$ establish a bijection. Then \textbf{P2} induces the bijection between $(\bar{\mathbf{x}}_b, r_{0})$ and $(\mathbf{y}_b, r_{d_b})$.

Then we prove $(\bar{\mathbf{x}}_b, r_{0})$ and $(\mathbf{y}_b, r_{d_b})$ establish a bijection. For ease of analysis, we rewrite the Step 2 in the decoding process such that $v_i^d = y_i \cdot m_{i} + r_{i}; x_i^d = \lfloor v_i^d / m_{i-1} \rfloor, r_{i-1}^d = v_i^d \mod m_{i-1}$. We should prove $x_i^d = x_i$ for all $i=1,...,d_b$ and $r_0^d = r_0$, which is done with mathematical induction.

(i) For $l = d_b$, we prove that $(x_l^d, r_{l-1}^d) = (x_l, r_{l-1})$. In fact, For the encoding process, $x_l \cdot m_{l-1} + r_{l-1} = v_l^e =  y_l \cdot m_l + r_l$ and $r_l \in [0, m_l) (m_l = m_{d_b} = 2^C)$. For the decoding process, we have $v_l^d = y_l \cdot m_l + r_l = v_l = x_l \cdot m_{l-1} + r_{l-1}$. As $r_{l-1} \in [0, m_{l-1})$, it is clear that $x_l^d = \lfloor (x_l \cdot m_{l-1} + r_{l-1}) / m_{l-1} \rfloor = x_l, r_{l-1}^d = (x_l \cdot m_{l-1} + r_{l-1}) \mod m_{l-1} = r_{l-1}$.

(ii) For $1 \le l < d_b$, if $(x_{l+1}^d, r_{l}^d) = (x_{l+1}, r_{l})$, we prove that $(x_l^d, r_{l-1}^d) = (x_l, r_{l-1})$. For the encoding process, $x_l \cdot m_{l-1} + r_{l-1} = v_l =  y_l \cdot m_l + r_l$. For the decoding process, $v_l^d = y_l \cdot m_l + r_l^d$. As $r_l^d = r_l$, we have $v_l^d = v_l$. As $r_{l-1} \in [0, m_{l-1})$, it is clear that $x_l^d = \lfloor (x_l \cdot m_{l-1} + r_{l-1}) / m_{l-1} \rfloor = x_l, r_{l-1}^d = (x_l \cdot m_{l-1} + r_{l-1}) \mod m_{l-1} = r_{l-1}$.

Note that we use a fact that $r_l \in [0, m_l), l = 1,...,d_b$ in (i)(ii), which can be directly derived from $r_l = v_l \mod m_l$.

From (i)(ii), we have $x_l^d = x_l$ for all $l=1,...,d_b$ and $r_0^d = r_0$. It conveys that $(\mathbf{x}_b, r_{0})$ and $(\mathbf{z}_b, r_{d_b})$ are bijections, which completes the proof of \textbf{P2}.
\end{proof}

\section{Detailed Numerical Error Analysis of iVPF}
\label{app:error}

In this section, we perform detailed numerical error analysis for iVPF. As the iVPF model mainly contains coupling layer and $1 \times 1$ convolution layer, we focus on error analysis on these layers. Overall error analysis is performed based on the composition of flow layers.

In the rest of this section, the notation is the same as that in the main paper.
We first emphasise that the error between $x$ and the quantized $x$ at precision $k$, denoted by $\round{x}$, is
\begin{equation*}
    |x - \round{x}| \le 2^{-k-1}.
\end{equation*}

\begin{table*}[t]
\centering
\small
\caption{Coding efficiency of LBB and iVPF on CIFAR10. $k=14$ is used in iVPF.}
\label{tab:timing_lbb}
\begin{tabular}{lcccc}
\toprule
         & batch size & inference time (ms) & time w/ coding (ms) & \# coding \\
\midrule
LBB~\cite{ho2019compression} & 64 & 16.2 & 116 & 188 \\
 & 256 & 13.8 & 97 & 188 \\
\midrule
\textbf{iVPF (Ours)} & 64 & \textbf{10.9} & \textbf{11.4} & \textbf{2} \\
 & 256 & \textbf{6.1} & \textbf{6.6} & \textbf{2} \\
\bottomrule
\end{tabular}
\end{table*}

\subsection{MAT and Coupling Layer}
\label{sec:err_coupling}

For all $i = 1,2,...,{d_b}-1$, $m_i = \mathrm{round}(m_0 / \prod_{j=1}^i s_j)$; and $m_{d_b} = m_0 = \mathrm{round}(m_0 / \prod_{j=1}^i s_j)$ as $\prod_{j=1}^{d_b} s_j = 1$. Then we have $m_{i} - 0.5 \le m_0 / \prod_{j=1}^i s_j \le m_{i} + 0.5$, and $m_{i-1} - 0.5 \le m_0 / \prod_{j=1}^{i-1} s_j \le m_{i-1} + 0.5$. Thus $s_i$ is approximated with $m_{i-1} / m_i$ such that
\begin{equation*}
    \frac{m_{i-1} - 0.5}{m_i + 0.5} \le s_i \le \frac{m_{i-1} + 0.5}{m_i - 0.5}
\end{equation*}
Considering $s_i$ is fixed and $m_0 = m_{d_b} = 2^C$, it is clear that $m_i = O(2^C)$. As $\frac{m_{i-1}}{m_i} - \frac{m_{i-1} - 0.5}{m_i + 0.5} = \frac{1}{m_i} \cdot \frac{0.5(m_{i-1} + m_i)}{m_i + 0.5} = O(2^{-C})$, therefore we have $s \ge \frac{m_{i-1}}{m_i} - O(2^C)$. Similar conclusion can be arrived such that $s \le \frac{m_{i-1}}{m_i} + O(2^C)$ and 
\begin{equation*}
    |s_i - \frac{m_{i-1}}{m_i}| \le O(2^{-C}).
\end{equation*}

Then we investigate the error between $y_i = \lfloor (2^k \cdot x_i \cdot m_{i-1} + r) / m_i \rfloor / 2^k \ (0 \le r < m_{i-1})$ and $s_i \cdot x_i$. Note that $2^k$ in the above equality correspond to Line 2 and 12 in Alg.~\ref{alg:coupling}. In fact, $y_i \le (2^k \cdot x_i \cdot m_{i-1} + r) / (2^k \cdot m_i) < (2^k \cdot x_i \cdot m_{i-1} + m_{i-1}) / (2^k \cdot m_i) = x_i \cdot m_{i-1} / m_i + m_{i-1} / (2^k \cdot m_i) =  x_i \cdot m_{i-1} / m_i + O(2^{-k})$, and $y_i > [(2^k \cdot x_i \cdot m_{i-1} + r) / m_i - 1] / 2^k >  x_i \cdot m_{i-1} / m_i - 1 / 2^k = x_i \cdot m_{i-1} / m_i - O(2^{-k})$. With $|s_i - \frac{m_{i-1}}{m_i}| \le O(2^{-C})$, the error is estimated such that
\begin{equation*}
\begin{split}
    |y_i - s_i \cdot x_i| &\le |y_i - \frac{m_{i-1}}{m_i} x_i| + O(2^{-C}) \\
    &\le O(2^{-k}) + O(2^{-C}) = O(2^{-k}, 2^{-C})
\end{split}
\end{equation*}

As $z_i = y_i + \round{t_i}$ and $|\round{t_i} - t_i| = O(2^{-k})$, we finally have
\begin{equation*}
\begin{split}
    |z_i - (s_i x_i + t_i)| &\le |y_i - s_i \cdot x_i| + |\round{t_i} - t_i| \\ 
    &= O(2^{-k}, 2^{-C}) + O(2^{-k}) = O(2^{-k}, 2^{-C})
\end{split}
\end{equation*}
which is consistent with Eq. (\ref{eq:coupling_error}) in the main paper. Thus the coupling layer in iVPF brings $O(2^{-k}, 2^{-C})$ error.

\subsection{$1 \times 1$ Convolution Layer}
\label{sec:err_conv}

For any $\mathbf{x}$, denote by $\Tilde{z}_i$ the $i$th element of $\mathbf{U} \mathbf{x}$ and by $z_i$ that derived in Eq. (\ref{eq:upper}), it is clear that $|z_i - \Tilde{z}_i| = O(2^{-k})$ as only quantization operation is involved.

For any $\mathbf{x}$, denote by $\Tilde{z}_i$ the $i$th element of $\mathbf{\Lambda} \mathbf{x}$ and by $z_i$ that derived with Alg. \ref{alg:coupling}, according to Sec. \ref{sec:err_coupling}, we have $|z_i - \Tilde{z}_i| = O(2^{-k}, 2^{-C})$.

For any $\mathbf{x}$, denote by $\Tilde{z}_i$ the $i$th element of $\mathbf{L} \mathbf{x}$ and by $z_i$ that derived in Eq. (\ref{eq:lower}), it is clear that $|z_i - \Tilde{z}_i| = O(2^{-k})$ as only quantization operation is involved.

No numerical error is involved by performing permutation operation with the permutation matrix $\mathbf{P}$.

As the convolution layer involves sequential matrix multiplications of $\mathbf{U}, \mathbf{\Lambda}, \mathbf{L}$ and $\mathbf{P}$, the error is accumulated. 
Denote by $\mathbf{z} = \mathbf{P} \mathbf{L} \mathbf{\Lambda} \mathbf{U}$ and $\Bar{\mathbf{z}}$ the output of the corresponding iVPF layer.
We have
\begin{equation*}
    |\Bar{\mathbf{z}} - \mathbf{z}| = O(2^{-k}, 2^{-C}).
\end{equation*}

\subsection{Overall Error Analysis}

Denote by $\mathbf{z}$ the output of the continuous volume preserving flow model and by $\bar{\mathbf{z}}$ the output of iVPF model, then we will show that 
\begin{equation*}
    |\bar{\mathbf{z}} - \mathbf{z}| = O(L2^{-k}, L2^{-C}).
\end{equation*}

Consider the continuous flow layer $f_l$ and the corresponding iVPF layer $\bar{f}_l$.
With the same input $\mathbf{y}_{l-1}$, we have $|\bar{f}_l(\mathbf{y}_{l-1}) - f_l(\mathbf{y}_{l-1})| = O(2^{-k}, 2^{-C})$.
It usually holds that $f$ is $\lambda_l$-Lipschitz continuous, and therefore $|f_l(\bar{\mathbf{y}}_{l-1}) - f_l(\mathbf{y}_{l-1})| \le \lambda_l |\bar{\mathbf{y}}_l - \mathbf{y}_l|$. 
Thus for the volume preserving flow model with $\mathbf{y}_0 = \mathbf{x}, \mathbf{z} = \mathbf{y}_L, \mathbf{y}_l = f_l (\mathbf{y}_{l-1})$, and the corresponding iVPF model with $\bar{\mathbf{y}}_0 = \mathbf{x}, \bar{\mathbf{z}} = \bar{\mathbf{y}}_L, \bar{\mathbf{y}}_l = \bar{f}_l (\bar{\mathbf{y}}_{l-1})$, we have 
\begin{equation*}
\begin{split}
    |\bar{\mathbf{y}}_l - \mathbf{y}_l| &= |\bar{f}_l(\bar{\mathbf{y}}_{l-1}) - f_l(\mathbf{y}_{l-1})| \\
    &\le |\bar{f}_l(\bar{\mathbf{y}}_{l-1}) - f_l(\bar{\mathbf{y}}_{l-1})| + |f_l(\bar{\mathbf{y}}_{l-1})  - f_l(\mathbf{y}_{l-1})| \\
    &\le  O(2^{-k}, 2^{-C}) + \lambda_l |\bar{\mathbf{y}}_{l-1} - \mathbf{y}_{l-1}|
\end{split}
\end{equation*}
and therefore
\begin{equation*}
\begin{split}
    |\bar{\mathbf{z}} - \mathbf{z}| &= \sum_{l=1}^L (\prod_{k=l+1}^L \lambda_k) O(2^{-k}, 2^{-C}) \\
    &= O(L2^{-k}, L2^{-C})
\end{split}
\end{equation*}
the last equality hold as $\prod_{k=l+1}^L \lambda_k$ is bounded, which is guaranteed by the volume-preserving property.

From the above formula, if $L$ is limited and $2^{-k}$ and $2^{-C}$ are relatively small, the latents generated with the iVPF model is able to approximate the true distribution.

\section{More Experiments on Coding Efficiency}
\label{app:label}

The experiment is conducted on PyTorch framework with Tesla P100 GPU and Intel(R) Xeon(R) CPU E5-2690 v4 @ 2.6GHz GPU. We adapt the ANS decoding and encoding code from LBB~\cite{ho2019compression}, which is very time-efficient. The coding time is shown in Table \ref{tab:timing_lbb}. It is clear that the proposed iVPF method is much faster than LBB. Moreover, as LBB involves a large number of coding schemes, ANS coding takes most of the time, while model inference time takes the main time in iVPF.

\end{document}